\pgfplotsset{compat=1.18}
\crefname{theorem}{Thm.}{Thm.}
\crefname{lemma}{Lem.}{Lem.}
\crefname{definition}{Def.}{Def.}
\crefname{example}{Ex.}{Ex.}
\newtheorem{theorem}{Theorem}[section]
\newtheorem{lemma}[theorem]{Lemma}
\theoremstyle{definition}
\newtheorem{example}[theorem]{Example}
\DeclareMathOperator{\cs}{cs}
\newcommand{\R}{\mathbb{R}}
\newcommand{\softmax}{\mathop{\textnormal{softmax}}}
\newcommand{\mat}[1]{\mathbf{#1}}
\newcommand{\qproj}{\mat{W}^{\text{Q}}}
\newcommand{\kproj}{\mat{W}^{\text{K}}}
\newcommand{\vproj}{\mat{W}^{\text{V}}}
\newcommand{\dhid}{{d_{\textnormal{hid}}}}
\definecolor{softgreen}{rgb}{0.88, 1, 0.88}
\definecolor{softorange}{rgb}{1, 0.88, 0.7}
\definecolor{softred}{rgb}{1, 0.7, 0.7}
\definecolor{darkblue}{rgb}{0, 0, 0.5}
\title{Concise One-Layer Transformers Can Do Function Evaluation\\
\centering(Sometimes)}
\author{Lena Strobl\\
  Umeå University\\
  \href{mailto:lena.strobl@umu.se}{\tt lena.strobl@umu.se} \\\And
  Dana Angluin\\
  Yale University\\
  \href{mailto:dana.angluin@yale.edu}{\tt dana.angluin@yale.edu}\\\And
  Robert Frank\\
  Yale University\\
  \href{mailto:robert.frank@yale.edu}{\tt robert.frank@yale.edu}
}
\begin{document}

\ifcolmsubmission
\linenumbers
\fi

\maketitle

\begin{abstract}
While transformers have proven enormously successful in a range of tasks, their fundamental properties as models of computation are not well understood.
This paper contributes to the study of the expressive capacity of transformers, focusing on their ability to perform the fundamental computational task of evaluating an arbitrary function from $[n]$ to $[n]$ at a given argument.
We prove that concise 1-layer transformers (i.e., with a polylog bound on the product of the number of heads, the embedding dimension, and precision) are capable of doing this task under some representations of the input, but not when the function's inputs and values are only encoded in different input positions.
Concise 2-layer transformers can perform the task even with the more difficult input representation.
Experimentally, we find a rough alignment between what we have proven can be computed by concise transformers and what can be practically learned.
\end{abstract}
\section{Introduction}
\label{sec:introduction}

The transformer architecture for neural networks has had a revolutionary impact on the development and deployment of large language models.
While the machine learning and NLP literatures are awash in papers about the empirically assessed efficacy of transformers for various applications, much less is known about their in-principle limits as models of computation.
There exists a body of work, surveyed in \citet{strobl_what_10.1162/tacl_a_00663}, on the expressiveness of transformers as models for acceptance of formal languages, much of which has made productive use of insights from logic and circuit complexity.
In contrast, less is known about the classes of non-binary functions that transformers can compute.
\citet{yun2020aretransformers} prove that an arbitrary function can be approximated by a transformer, though their construction requires unbounded depth.
\citet{kajitsuka2024aretransformers} strengthen this result, showing that transformers with a single self-attention layer are universal function approximators, however, in general this construction requires a transformer of size exponential in the input.
Our goal in this paper is to explore the capacity of transformers with one attention layer and restrictive bounds on the size of the network to express a mapping from input sequences to outputs.
We believe that understanding transformer expressiveness in such a restrictive context is important for characterizing their practical strengths and limits.

\citet{peng2024limitationstransformerarchitecture} address this question, using a communication complexity argument to demonstrate that transformers with a single attention layer of fixed size are unable to perform \textit{function composition} over functions of sufficiently large domain.
\citet{sanford2024onelayertransformersfailsolve} focus on the capacity of 1-layer transformers to perform the \textit{induction head task} \citep{elhage2021amathematicalframework,olsson2022incontextlearninginductionheads}.
They again use communication complexity to provide a lower bound on the transformer size necessary to solve this task, specifically  $hdp = \Omega(n)$, where $h$ is the number of attention heads, $d$ is the model dimension, $p$ is the bits of precision and $n$ is the length of the input sequence.
\cite{chen2024theoreticallimitationsmultilayertransformer} use communication complexity to prove a lower bound on computation by multi-layer decoder-only transformers.
\citet{kozachinskiy2025strassenattentionunlockingcompositional} develop a proof technique using the concept of VC-dimension to establish a lower bound on the size of one-layer transformers that compute function and relation composition that is independent of model precision, but sensitive to the size of the MLP: max$\{h, d,P\} \geq n^{\Omega(1)}$, where $h$ and $d$ are as above and $P$ is the number of parameters in the MLP.

While function composition and the induction head task are significant computational problems, we consider a more fundamental building block of computation, namely \textit{function evaluation}.
Specifically, for some function $f: [n] \to [n]$ (with $n$ a specific positive integer), the input gives the values of $i$ and $f(i)$ for all $i \in [n]$ together with an additional value $i^* \in [n]$.
The correct output of the transformer is the value of $f(i^*)$.
The task may be thought of as lookup in a key-value table; we refer to $i$ as a \emph{key} and $f(i)$ as the corresponding  \emph{value}.
Then $i^*$ is the \emph{target key}.

The difficulty of the function evaluation  task for a 1-layer transformer depends significantly on details of how the function is presented as input to the transformer; we study the following cases of the presentation of the function:
\begin{enumerate}[noitemsep,nolistsep]
    \item \textbf{No keys}.
    For $i \in [n]$, position $i$ of the input contains $f(i)$.
    Position $n$ contains $i^*$.
    \item \textbf{Same position, ordered keys}.
    For $i \in [n]$, position $i$ of the input contains the pair $(i,f(i))$.
    Position $n$ contains $i^*$.
    \item \textbf{Same position, permuted keys}.
    For some permutation $\pi$ of $[n]$, for each $i \in [n]$, position $i$ of the input contains the pair $(\pi(i),f(\pi(i)))$.
    Position $n$ contains $i^*$.
    The permutation may be different for each input.
    \item \textbf{Consecutive positions, ordered keys}.
    For $i \in [n]$, position $2i$ of the input contains $i$ and position $2i+1$ contains $f(i)$.
    Position $2n$ contains $i^*$.
    \item \textbf{Consecutive positions, permuted keys}.
    For some permutation $\pi$ of $[n]$, for each $i \in [n]$, position $2i$ of the input contains $\pi(i)$ and position $2i+1$ contains $f(\pi(i))$.
    Position $2n$ contains $i^*$.
    The permutation may be different for each input.
\end{enumerate}
These modes of presentation differ in two significant dimensions: (i) Does the presentation of individual function values require the coordination of multiple input positions (true in the cases of \textit{consecutive} presentation of $i$ and $f(i)$)? (ii) Is there redundancy between the key and positional embedding (true in the cases of \textit{no keys} or \textit{ordered} keys)? In \cref{section:theoretical-results}, we prove that for each of cases (1), (2), (3) and (4), there exists a family of 1-layer leftmost hard attention transformers with a single head, embedding dimension $O(1)$ and $O(\log n)$ bits of precision that correctly performs the function evaluation task.
We do this by providing explicit constructions of transformers for the task.
For case (5) we prove lower and upper bounds on the size and precision of a 1-layer softmax attention transformer that can perform the function evaluation task.
We also show that a family of 2-layer hard attention transformers can perform function evaluation in case (5) with one head, embedding dimension $O(1)$ and $O(\log n)$ bits of precision.

In \cref{section:empirical-results} we turn to an empirical exploration of the learnability of function evaluation in transformers.
We find that for the presentations in cases (1-4), where there exist concise families of 1-layer transformers that perform function evaluation, randomly initialized small transformers can be trained to perform the task with gradient descent, though variations exist across the different modes of presentation with respect to the embedding size that is necessary for learning to succeed.
For case (5), we find that 1-layer transformers struggle to learn function evaluation, even for functions with small $n$, while 2-layer transformers are more successful in learning the task.
These results point to an intriguing correlation between the existence of concise models and learnability.

\section{Preliminaries}
\label{sec:preliminaries}

\subsection{Notation}
For any positive integer $n$, $[n]$ denotes the set of integers $\{0,1,\ldots,n-1\}$.
We use the notation $(\mathbb{R}^d)^*$ for sequences (of arbitrary finite length) with elements from $\mathbb{R}^d$.
If $\mathbf{X} \in (\R^d)^*$ has length $n$, then for $i \in [n]$, $\mathbf{X}[i] \in \R^d$ refers to element $i$ of the sequence.
A function $g \colon (\R^d)^* \to (\R^d)^*$ is \emph{length-preserving} if it maps an input sequence of length $n$ to an output sequence of the same length $n$.

\subsection{Transformer Models}
\label{subsec:transformer_models}

We follow the definition of a transformer model given by \cite{strobl_what_10.1162/tacl_a_00663}, specializing it to the context we consider.
A \emph{transformer} of embedding dimension $d$ is a mapping from an input sequence to an output.
It has a length-preserving \emph{input embedding}, which maps a sequence of inputs to an element of $(\R^d)^*$, a finite sequence of attention or MLP \emph{layers}, which are length-preserving mappings from $(\R^d)^*$ to $(\R^d)^*$, and an \emph{output mapping}, which maps from $(\R^d)^*$ to an output.

\paragraph{Input Embedding.}
\label{subsubsec:input_embedding}
For the function evaluation task, we specify a positive integer $n$ and the input alphabet $\Sigma$ is either $[n]$ (a key or value) or $([n]\times[n]) \cup [n]$ (a key and value pair or a key).
For this task, the relevant length $\ell(n)$ of the input sequence is either $n+1$ or $2n+1$, depending on how the input is presented.
The \emph{token embedding} $\mat{w}$ maps $\Sigma$ to $\R^d$.
The \emph{position embedding} $\mat{p}$ maps $[\ell(n)]$ to $\R^d$.
Then \emph{input embedding} $\mat{e}$ maps the sequence of inputs $x_0,\ldots,x_{\ell(n)-1}$ to an element of $(\R^d)^{\ell(n)}$ by $\mat{e}(x_i) = \mat{w}(x_i) + \mat{p}(i)$.
In the case of \emph{no position embedding}, $\mat{p}$ maps to the zero vector.

\paragraph{Scaled Dot-Product Softmax Attention Layer.}
\label{subsubsec:attention}
Our constructions use only one head per attention layer; we specialize our definitions to this case.
\emph{Scaled dot-product self-attention} with $d$ input/output dimensions and $\dhid$ key/value dimensions is a length-preserving function mapping $\mat{X} \in (\R^d)^*$ to $\mat{Y} \in (\R^d)^*$ parameterized by $\qproj,\kproj \in \R^{\dhid \times d}$ and $\vproj \in \R^{d \times d}$.
For positions $i$ and $j$ in $\mat{X}$ and $\mat{Y}$, the \emph{query} at $i$ is $\mat{Q}[i] = \qproj(\mat{X}[i])$, the \emph{key} at $j$ is  $\mat{K}[j] = \kproj(\mat{X}[j])$, and the \emph{value} at $j$ is $\mat{V}[j] = \vproj(\mat{X}[j])$.
Then we define:
\begin{align*}
  \mat{s}[i,j] &= \frac{\mat{Q}[i] \cdot \mat{K}[j]}{\sqrt\dhid} \\
  \alpha[i,:] &= \softmax(\mat{s}[i,:]) \\
  \mat{Y}[i] &= \sum_{j\in[n]} \alpha[i,j] \, \mat{V}[j] \\
\end{align*}
We call $\mat{s}[i,j]$ the \emph{attention scores} and  $\alpha[i,j]$ the \emph{attention weights}.
Note that this definition omits layer normalization and residuals.
To include \emph{residuals}, we instead set $\mat{Y}[i] = \mat{X}[i] + \sum_{j\in[n]} \alpha[i,j] \, \mat{V}[j]$.

\paragraph{Hard Attention Layers: Leftmost, Rightmost, Average.}
\label{subsubsec:UHAT}
For any vector $\mat{x} \in \R^d$, define $M(\mat{x}) = \{i \in [n] \mid \forall j \in [n], \mat{x}[j] \le \mat{x}[i]\}$ to be the set of indices of the maximal elements of $\mat{x}$.
In a \emph{leftmost hard} attention layer, instead of the softmax computation, the leftmost maximal element is assigned attention weight 1 and other elements are assigned weight 0, that is,
$\alpha[i,:] = \mathbb{I}[j = \min M(\mat{s}[i,:])]$.
For a \emph{rightmost hard} attention layer, the rightmost maximal element is assigned weight 1:
$\alpha[i,:] = \mathbb{I}[j = \max M(\mat{s}[i,:])]$.
Then the attention layer output for each position $i$ is
$\mat{Y}[i] \;=\; \mat{V}[j^*]$,
where $j^*$ is the index of the (leftmost or rightmost) maximum attention score for position $i$.
In an \emph{average hard} attention layer, instead of the softmax computation, the elements of $M(\mat{s}[i,:])$ are each assigned attention weight $1/|M(\mat{s}[i,:])|$ and the other elements are assigned weight $0$.

\paragraph{MLP Layers.}
\label{subsubsec:MLP_layers}
A \emph{multilayer perceptron (MLP) layer} is a mapping $f$ from $\R^d$ to $\R^d$ that can be implemented by a 2-layer multilayer perceptron with ReLU activation.
It is determined by a dimension $d_{m}$ and two affine transformations, one from $\R^d$ to $\R^{d_m}$ and one from $R^{d_m}$ to $R^d$.
The input $\mat{X}$ is mapped to the output $\mat{Y}$ elementwise, that is, $\mat{Y}[i] = f(\mat{X}[i])$.
\paragraph{Output Mapping.}
\label{subsubsec:output_mapping}
In our case, the desired output is an element of $[n]$.
Let $\mat{Y} \in \R^{\ell(n)}$ be the output of the final layer of the transformer, and $\mat{Y}[\ell(n)-1]$ its final element.
There is an \emph{unembedding matrix} $\mat{U} \in \R^{n \times d}$, and the output of the transformer is taken to be the index $i \in [n]$ that maximizes $(\mat{U}\mat{Y}[\ell(n)-1])[i]$.

We are primarily concerned with the number of attention layers a transformer has, so in this paper a \emph{$k$-layer transformer} is one with $k$ attention layers and any number of MLP layers.
Note that we consider encoder-only transformers, with no masking.

\section{The Communication Size of Transformers for Function Evaluation}
\label{section:theoretical-results}

We define a transformer's \textit{communication-size}, henceforth \emph{c-size}, to be the product $hdp$, where $h$ is the number of heads per layer, $d$ is the embedding dimension, and $p$ is the precision (in bits) of the representation of numbers in the transformer.
The notion of c-size does not depend on the number of parameters in any MLP layer.
We present upper and lower bounds on the c-size of 1-layer transformers to perform function evaluation, expressed as a function of $n$.

Technically, we consider \emph{families of transformers}: for every positive integer $n$, there is a transformer $T_n$ that processes input sequences of length $n$, and we analyze how the parameters of the transformers change (or don't) as $n$ increases without bound.

\citet{sanford2023representationalstrengthslimitationstransformers}, \citet{chen2024theoreticallimitationsmultilayertransformer}, and \citet{kozachinskiy2025strassenattentionunlockingcompositional} distinguish between families of transformers of ``small'' ($n^{o(1)}$, subpolynomial) and ``large'' ($n^{\Omega(1)}$, polynomial or more) size.\footnote{\citet{kozachinskiy2025strassenattentionunlockingcompositional} use a different definition of the size of a transformer, see \cref{subsubsec:futher_lower_bounds}.}
Here we adopt a more stringent notion of conciseness; we say that a family of transformers is \emph{concise} if its c-size is $O(\log^{O(1)} n)$ (poly-logarithmic) in the input length $n$.

\subsection{The First Four Cases}
\label{sec:theoretical-upper-bounds-first-four}

In this section we give constructions of concise families of 1-layer leftmost hard attention transformers for function evaluation when the input presentation is one of the cases (1), (2), (3), or (4).

\paragraph{Integer Representation.}
For any positive integer $n$, we use the following representation of integers modulo $n$.
For any integer $k$, the value of $k \bmod n$ is represented by the point on the unit circle given by the angle $\theta_n(k) = 2\pi k/n$, whose coordinates are $\cs_n(k) = [\cos \theta_n(k), \sin \theta_n(k)]^\top$.
(See \cref{fig:sincosconstruction} for an example.)
In this representation, for $j, k \in [n]$ the dot product of $\cs_n(j)$ with $\cs_n(k)$ is $1$ if $j = k$ and is at most $1 - 1/n^2$ if $j \neq k$.
Thus, $O(\log n)$ bits of precision suffice to distinguish two different integers from $[n]$ in this representation.

We first show that a concise family of 1-layer transformers can perform function evaluation with an input embedding in which the key $i$ and value $f(i)$ are stored in the same position of the input, even when the keys are in permuted order (case 3).
The special case in which the permutation is the identity is case (2), so it is also covered by this theorem.
An example of the construction in the proof of \cref{theorem:positive-case3} is given in \cref{appendix:example-of-theorem-positive-case3}.

\begin{theorem}
\label{theorem:positive-case3}
For any positive integer $n$, there exists a 1-layer transformer that performs function evaluation for domain $[n]$ when the key $i$ and value $f(i)$ are stored in the same position and keys are permuted (case 3).
The transformer uses leftmost hard attention,\footnote{
It seems likely that the hard attention transformers that we present could be converted to softmax transformers with different input embeddings or with inverse polynomial temperature, using the approach of \cite{yang2024simulatinghardattentionusing}.} 
one head, 
embedding dimension $4$, 
$O(\log n)$ bits of precision, no position embedding, no residuals, and no MLP layer.
\end{theorem}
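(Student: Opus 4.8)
The plan is to give an explicit construction that splits the four embedding coordinates into a two-dimensional \emph{key slot} (coordinates $1,2$) and a two-dimensional \emph{value slot} (coordinates $3,4$), each holding the circular representation $\cs_n(\cdot)$ of the relevant integer. First I would define the token embedding $\mat{w}$ on $\Sigma = ([n]\times[n])\cup[n]$: a pair $(k,v)$ maps to the vector with $\cs_n(k)$ in the key slot and $\cs_n(v)$ in the value slot, and the bare target key $i^*$ maps to the vector with $\cs_n(i^*)$ in the key slot and zeros in the value slot. Since there is no position embedding, $\mat{e}(x_i)=\mat{w}(x_i)$, and under the case-(3) presentation position $j\in[n]$ holds $\cs_n(\pi(j))$ in the key slot and $\cs_n(f(\pi(j)))$ in the value slot, while position $n$ holds $\cs_n(i^*)$ in the key slot.

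Next I would take both $\qproj$ and $\kproj$ to be the projection onto the key slot, so that the query at position $n$ is $\cs_n(i^*)$ and the key at position $j\in[n]$ is $\cs_n(\pi(j))$. By the stated property of the representation, the attention score $\mat{s}[n,j]$ is proportional to $\cs_n(i^*)\cdot\cs_n(\pi(j))$, which equals $1$ precisely when $\pi(j)=i^*$ — that is, when $j=\pi^{-1}(i^*)$, which exists and is unique because $\pi$ is a permutation — and is at most $1-1/n^2$ for every other $j\in[n]$. The target position $n$ itself also attains score $1$, creating a tie; this is exactly where leftmost hard attention is used: since $\pi^{-1}(i^*)\in[n]$ is strictly less than $n$, the leftmost maximal index is $\pi^{-1}(i^*)$, so the layer produces $\mat{Y}[n]=\vproj(\mat{X}[\pi^{-1}(i^*)])$.

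I would then let $\vproj\in\R^{4\times4}$ copy the value slot of its input into the key slot of its output and zero the rest, so that $\mat{Y}[n]$ equals $\cs_n(f(i^*))$ in the key slot; and take the unembedding matrix $\mat{U}\in\R^{n\times4}$ to have row $v$ equal to $\cs_n(v)$ on the key-slot coordinates and $0$ elsewhere. Then $(\mat{U}\mat{Y}[n])[v]=\cs_n(v)\cdot\cs_n(f(i^*))$ is $1$ for $v=f(i^*)$ and at most $1-1/n^2$ otherwise, so the output index is $f(i^*)$, as required. The construction uses embedding dimension $4$, one head, and no MLP layer, residuals, or position embedding.

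For the precision bound I would observe that every value computed is a coordinate of some $\cs_n(k)$ or a dot product of two such vectors, and that both decisions that matter — which position attention selects and which index the unembedding selects — only require separating the value $1$ from values at most $1-1/n^2$; by the remark following the definition of the representation, $O(\log n)$ bits suffice. I do not expect a genuine obstacle: the only two points needing care are the tie at the target position (resolved by leftmost hard attention) and the fact that the single embedding $\mat{w}$ must consistently handle both pair tokens and the bare target key, which the key-slot/value-slot split is designed to accommodate.
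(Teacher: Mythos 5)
Your construction is essentially identical to the paper's own proof: the same key-slot/value-slot split of the four coordinates, the same $\qproj=\kproj$ projection onto the key slot, the same $\vproj$ copying the value slot, and the same $\cs_n$-row unembedding, with $O(\log n)$ precision justified by the $1$ versus $1-1/n^2$ gap. Your explicit handling of the tie at position $n$ (where the self-score also attains the maximum, resolved by leftmost hard attention since $\pi^{-1}(i^*)<n$) is a point the paper leaves implicit, but it does not change the argument.
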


\begin{proof}
The input sequence is
$(k_0,v_0),(k_1,v_1),\ldots,(k_{n-1},v_{n-1}),i^*$,
where the keys $k_i$, values $v_i$, and target key $i^*$ are elements of $[n]$ and for some permutation $\pi$ of $[n]$ we have $k_i = \pi(i)$ and $v_i = f(\pi(i))$ for all $i \in [n]$, where $f$ is any function from $[n]$ to $[n]$.
The desired output is $f(i^*) = f(\pi(j)) = v_j$ such that $k_j = \pi(j) = i^*$.
Note that the keys are a permutation of $[n]$.

We construct the input embedding $\mathbf{X} = [\mathbf{x}_0, \mathbf{x}_1, \dots, \mathbf{x}_{n-1},\mathbf{x}_n]$, as follows.
For $j \in [n]$:
\[
    \mathbf{x}_j 
    = 
    \begin{bmatrix}
    \cs_n(k_j) \\
    \cs_n(v_j)
    \end{bmatrix}
    =
    \begin{bmatrix}
    \cos \theta_n(k_j) \\
    \sin \theta_n(k_j) \\
    \cos \theta_n(v_j) \\
    \sin \theta_n(v_j) \\
    \end{bmatrix}, \qquad \text{ and }\qquad
\mathbf{x}_n
= \begin{bmatrix}
    \cs_n(i^*) \\
    0_{2 \times 1}
\end{bmatrix}
=
\begin{bmatrix}
    \cos \theta_n(i^*) \\
    \sin \theta_n(i^*) \\
    0 \\
    0 \\
\end{bmatrix}.
\]
Thus the embedding dimension is $4$.

We define the matrices $\qproj$, $\kproj$, and $\vproj$ as:
\[
    \qproj = \kproj = \begin{bmatrix}
    \mat{I}_2 & \mat{0}_{2 \times 2}
    \end{bmatrix}, \qquad
    \vproj = \begin{bmatrix}
    \mat{0}_{2 \times 2} & \mat{I}_2 \\
    \mat{0}_{2 \times 2} & \mat{0}_{2 \times 2} 
    \end{bmatrix},
\]
where $\mat{I}_2$ is the $2 \times 2$ identity matrix.
For a vector $\mat{x} \in \R^4$, each of $\qproj \mat{x}$ and $\kproj \mat{x}$ is the vector of the coordinates $0$ and $1$ of $\mat{x}$, and $\vproj \mat{x}$ is the vector of the coordinates $2$ and $3$ of $\mat{x}$ followed by two $0$'s.

For the target key $i^* \in [n]$, the query vector at position $n$ is:
\[
    \mat{q}_n = \qproj \mat{x}_n = \cs_n(i^*) = [\cos(i^*),\sin(i^*)]^\top.
\]
For each position $j \in [n]$, the key vector is:
\[
    \mat{k}_j = \kproj \mat{x}_j = \cs_n(k_j) = [\cos(k_j),\sin(k_j)]^\top.
\]
For position $n$ the attention scores are computed as:
\[
    s_j = \mat{s}[n,j] = \frac{\mat{q}_n^\top \mat{k}_j}{\sqrt{\dhid}} = \frac{\cos(\theta_n(i^*) - \theta_n(k_j))}{\sqrt{2}}.
\]
Using leftmost hard attention, the attention scores for position $n$ are:
\[
\alpha_j = \begin{cases}
1, & \text{for the minimum } j \text{ such that } s_j = \max_k s_k, \\
0, & \text{otherwise}.
\end{cases}
\]
This $j$ is the unique position in $[n]$ such that $k_j = i^*$.
The output of the attention layer at position $n$ is:
\[
\mat{h} = \sum_{j=0}^{n-1} \alpha_j \vproj \mathbf{x}_j.
\]
This is $[\cos(\theta_n(v_j)),\sin(\theta_n(v_j)),0,0]^\top$ such that $k_j = i^*$.

We define the unembedding matrix $\mat{U} \in \R^{n \times 4}$ to consist of the $n$ rows $[\cos(\theta_n(k)),\sin(\theta_n(k)),0,0]^\top$ for $k = 0,1,\ldots,n-1$.
Applying it to $\mat{h}$, the attention output at position $n$, we have $\mat{\hat{v}} = \mat{U} \mat{h}$.
The final output of the transformer is the index $k$ where $\mat{\hat{v}}$ is maximized, which is the value of $v_j$ such that $k_j = i^*$, that is, the value of $f(i^*)$, the value of the function at the target key.
\end{proof}

For the cases of no keys (case 1) or consecutive keys and values (cases 4 and 5), a transformer with no position embedding cannot perform function evaluation when $n \ge 2$, because the operation of such a transformer is invariant to any permutation of the input sequence, but the task is not.
(See also \cref{tab:case_1_to_4_accuracy}.)
However, with suitable position embeddings we show that for the case of no keys (case 1) or the case of consecutive keys and values with ordered keys (case 4), there is a concise family of 1-layer transformers to perform function evaluation.
The constructions are based on that of \cref{theorem:positive-case3}, using the position embeddings to convey information about the key $i$ in the position that holds the value $f(i)$.

\begin{theorem}
\label{thm:in_order_no_keys}
For any positive integer $n$, there exists a 1-layer transformer with a suitable position encoding that performs function evaluation for domain $[n]$ when there are no keys and the values $f(i)$ are stored in order by $i$ (case 1).
The transformer has leftmost hard attention, one head, embedding dimension $4$, $O(\log n)$ bits of precision, no residuals, and no MLP layer.
\end{theorem}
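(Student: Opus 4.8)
The plan is to adapt the construction from the proof of \cref{theorem:positive-case3}, moving the key information out of the token embedding and into the position embedding. In case 1 the input is $f(0), f(1), \ldots, f(n-1), i^*$: position $i$ for $i \in [n]$ carries only the value $f(i)$ and no key, but since $i$ coincides with the position index, the missing key is exactly what a position embedding can supply. So case 1 is information-theoretically the same as case 2, and the reduction to \cref{theorem:positive-case3} should be essentially mechanical.

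Concretely, I would keep a single token embedding $\mat{w}(k) = [\cos\theta_n(k), \sin\theta_n(k), 0, 0]^\top$ for all $k \in [n]$, and use the position embedding to install the key: $\mat{p}(i) = [0, 0, \cos\theta_n(i), \sin\theta_n(i)]^\top$ for $i \in [n]$, and $\mat{p}(n) = 0$. Then for $i \in [n]$ the embedding at position $i$ has $\cs_n(f(i))$ in coordinates $0,1$ and $\cs_n(i)$ in coordinates $2,3$, while the embedding at position $n$ has $\cs_n(i^*)$ in coordinates $0,1$ and zeros in coordinates $2,3$ — exactly the shape of the input embedding in \cref{theorem:positive-case3} with $k_i = i$ and $v_i = f(i)$. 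I would then take $\qproj = [\mat{I}_2 \ \mat{0}_{2\times 2}]$, so the query at position $n$ is $\cs_n(i^*)$; $\kproj = [\mat{0}_{2 \times 2}\ \mat{I}_2]$, so the key at each position $j \in [n]$ is $\cs_n(j)$; and a value matrix that passes coordinates $0,1$ (holding $\cs_n(f(j))$) through to coordinates $0,1$ of the output and zeros the rest. The attention score at position $n$ is then $\mat{s}[n,j] = \cos(\theta_n(i^*) - \theta_n(j))/\sqrt 2$ for $j \in [n]$ and $\mat{s}[n,n] = 0$; by the separation property of the $\cs_n$ representation it attains its unique maximum, of value $1/\sqrt 2$, at $j = i^*$. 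Leftmost hard attention therefore outputs $[\cs_n(f(i^*))^\top, 0, 0]^\top$ at position $n$, and the same unembedding matrix as in \cref{theorem:positive-case3} (rows $[\cs_n(k)^\top, 0, 0]^\top$ for $k \in [n]$) yields $f(i^*)$.

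The remaining work is just the routine bookkeeping already carried out in \cref{theorem:positive-case3}: embedding dimension $4$, one head, leftmost hard attention, no residuals, no MLP, and $O(\log n)$ bits of precision to realize the $\cs_n$ coordinates and to resolve the $1/n^2$ gaps in the dot products. The only genuinely new point is that position $n$ is now itself a candidate attention target, so one must check that it neither ties with nor beats position $i^*$; with $\mat{p}(n) = 0$ this is immediate since $\mat{s}[n,n] = 0 < 1/\sqrt 2$. (If one preferred a position embedding uniform in form across all $n+1$ positions, e.g.\ $\mat{p}(n) = [0,0,1,0]^\top$, the same check still goes through, with the leftmost tie-break covering the case $i^* = 0$.) I do not anticipate any real obstacle: the content of the theorem is precisely the observation that an ordered keyless presentation is equivalent to case 2, which collapses it to \cref{theorem:positive-case3}.
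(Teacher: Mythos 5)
Your proposal is correct and takes essentially the same approach as the paper: move the key $i$ from the token embedding into the position embedding in coordinates $2,3$, query on coordinates $0,1$, key on coordinates $2,3$, and reuse the value/unembedding machinery from \cref{theorem:positive-case3}. The only cosmetic difference is that the paper embeds all quantities (tokens and positions, including position $n$) with $\cs_{n+1}$, so that position $n$ automatically gets a distinct angle and no special case is needed, whereas you use $\cs_n$ and set $\mat{p}(n)=0$ (or $\cs_n(0)$ with the leftmost tie-break) and verify explicitly that position $n$ never wins the attention; both handle the issue correctly.
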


In this case, we represent keys, values, and positions using the embedding for integers in $[n+1]$, and the position embedding for position $i$ contains the representation $\cs_{n+1}(i)$, which can be compared with the representation of the target key $\cs_{n+1}(i^*)$ in the final position.
Details of the proof are in \cref{appendix:proof_of_in_order_no_keys}.

With keys and values in consecutive positions and ordered keys, the 
idea of the construction is to ignore the keys as given in the input, and use the position embedding to provide a representation of $i$ in the same position as $f(i)$.
The proof is in \cref{appendix:proof_consecutive_ordered}.

\begin{theorem}
\label{thm:consecutive_ordered_case}
For any positive integer $n$, 
there is a 1-layer transformer with a suitable position encoding that performs function evaluation for domain $[n]$ when the key $i$ and value $f(i)$ are stored in consecutive positions with the keys in increasing order (case 4).
The transformer has leftmost hard attention, one head, embedding dimension $4$, $O(\log n)$ bits of precision, no residuals, and no MLP layer.
\end{theorem}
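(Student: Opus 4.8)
The plan is to adapt the construction in the proof of \cref{theorem:positive-case3}, using the position embedding to restore, in the odd position that holds $f(i)$, a $\cs_n$-encoding of the key $i$ that the input does not otherwise make available there; the keys that actually appear at the even positions are simply ignored. Concretely, write the input as $a_0,a_1,\ldots,a_{2n}$ with $a_{2i}=i$ and $a_{2i+1}=f(i)$ for $i\in[n]$, and $a_{2n}=i^\ast$. I would keep the $\cs_n$ representation of \cref{theorem:positive-case3} throughout (for keys, values, and the index derived from an odd position), so $4$ coordinates suffice. Define the token embedding by $\mat{w}(a)=[\,0,\,0,\,\cos\theta_n(a),\,\sin\theta_n(a)\,]^\top$ for every $a\in[n]$ --- so a token always lands in coordinates $2,3$ --- and the position embedding by $\mat{p}(2i+1)=[\,\cos\theta_n(i),\,\sin\theta_n(i),\,0,\,0\,]^\top$ for odd positions and $\mat{p}(m)=\mat{0}_{4\times1}$ for even positions (including $m=0$ and $m=2n$). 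Then the input embedding at an odd position $2i+1$ is $[\cs_n(i);\cs_n(f(i))]$, at an even position $2i$ it is $[0,0;\cs_n(i)]$, and at the final position $2n$ it is $[0,0;\cs_n(i^\ast)]$.

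\textbf{The attention head.} The one step that is not a direct copy of \cref{theorem:positive-case3} is to let the query and key projections read \emph{different} coordinate pairs. Take $\dhid=2$, let $\qproj$ extract coordinates $2,3$, let $\kproj$ extract coordinates $0,1$, and let $\vproj$ send $\mat{x}\in\R^4$ to the vector with coordinates $2,3$ of $\mat{x}$ placed in its coordinates $0,1$ and zeros elsewhere. Then the query at position $2n$ is $\cs_n(i^\ast)$ (read off the \emph{token} $i^\ast$), while the key at an odd position $2i+1$ is $\cs_n(i)$ (read off the \emph{position}), and the key at every even position, and at $2n$ itself, is $\mat{0}$. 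Hence $\mat{s}[2n,2i+1]=\cos(\theta_n(i^\ast)-\theta_n(i))/\sqrt{2}$, which equals $1/\sqrt2$ exactly when $i=i^\ast$ and is at most $(1-1/n^2)/\sqrt2$ otherwise, while $\mat{s}[2n,m]=0$ for even $m$. So the attention scores from the final position are uniquely maximized at $j^\ast=2i^\ast+1$; leftmost hard attention selects it (indeed the argmax set is a singleton), and $O(\log n)$ bits of precision separate $1/\sqrt2$ from the next largest possible score.

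\textbf{Read-out.} The remainder follows \cref{theorem:positive-case3}: the attention output at position $2n$ is $\vproj$ applied to the embedding at $2i^\ast+1$, i.e.\ $[\cos\theta_n(f(i^\ast)),\sin\theta_n(f(i^\ast)),0,0]^\top$, and taking the unembedding matrix $\mat{U}\in\R^{n\times4}$ with $k$-th row $[\cos\theta_n(k),\sin\theta_n(k),0,0]$ makes $(\mat{U}\mat{h})[k]=\cos(\theta_n(f(i^\ast))-\theta_n(k))$, which is maximized uniquely at $k=f(i^\ast)$, the desired output. The embedding dimension is $4$, there is one head, and there are no residuals and no MLP layer, as required.

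\textbf{Where the difficulty lies.} The only genuine obstacle is a design tension: the target token $i^\ast$ at the last position must be compared against \emph{position}-derived encodings of the keys, yet a value $f(i)$ must simultaneously be carried in a separate slot for the read-out, and the token embedding cannot tell whether it is embedding a key, a value, or the target. Using asymmetric $\qproj/\kproj$ --- the query reading the ``value slot'', the key reading the ``position slot'' --- is what lets the single token embedding play all three roles. A secondary point to verify is that the even positions, which now carry $\cs_n(i)$ in their value slots, do not create spurious maximal attention scores; this is precisely why their embeddings have coordinates $0,1$ zeroed, so their keys vanish.
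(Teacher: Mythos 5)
Your construction is correct and is essentially the paper's own proof: ignore the key tokens at even positions, use the position embedding of each odd position $2i+1$ to carry $\cs_n(i)$, have the query read the token slot and the key read the position slot so attention from position $2n$ uniquely selects $2i^\ast+1$, then read out $f(i^\ast)$ via the same unembedding as in \cref{theorem:positive-case3}. The only differences are cosmetic (you store the token in coordinates $2,3$ rather than $0,1$, so your $\vproj$ permutes blocks instead of being an identity block), which does not change the argument.
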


\subsection{The Fifth Case: Consecutive Keys and Values with Permuted Keys}
\label{subsec:the-fifth-case}

When the keys $i$ and values $f(i)$ are stored in consecutive positions and the keys may be permuted (case 5), there is no correlation between positions and keys, and information about a key and the corresponding value requires coordination between two different input positions.
In this case, no concise family of 1-layer transformers can perform function evaluation.
In \cref{thm:sanford-lower-bound} we show that any family of 1-layer transformers that performs function evaluation in case (5) must have  c-size $\Omega(n\log n)$.
In \cref{thm:one-layer-consecutive-permuted-large} we give an asymptotically matching upper bound, that is, there is a family of 1-layer softmax attention transformers of c-size $O(n \log n)$ that can perform function evaluation in case (5).
We also show that for all sufficiently large $n$, there is no 1-layer hard attention transformer (of any size) that can perform function evaluation for domain $[n]$ in case (5).

\subsubsection{\texorpdfstring{The Fifth Case: A Worst-Case 1-Layer Lower Bound of $\Omega(n \log n)$}{The Fifth Case: A Worst-Case 1-Layer Lower Bound of Omega(n log n)}}
\label{subsub-worst-case-lower-bound}

\citet{sanford2024onelayertransformersfailsolve} use communication complexity to give a lower bound on the c-size of any family of 1-layer transformers that solves the induction heads task.
They prove their lower bound under the assumption that the transformer ``uses $p$ bits of precision'', stated as follows: ``... the outputs of all embedding functions ... and self-attention heads may be rounded to rational numbers with at most $p$ bits of precision without changing the behavior of the mapping [defined by the transformer].''
We refer to this assumption as the \emph{$p$-bit rational precision assumption}.

We adapt the argument of \citet{sanford2024onelayertransformersfailsolve} to show that a family of 1-layer transformers satisfying the $p$-bit rational precision assumption that performs function evaluation when the keys and values are consecutive and the keys are permuted must have c-size $\Omega(n \log n)$.
This is a worst-case lower bound for performing the task perfectly.

We consider the following two-party communication complexity problem, which we term \emph{evaluate $f$ with domain $[n]$ on $0$.}
Let $k_0,\ldots,k_{n-1}$ be a permutation of $[n]$ and let $v_0,\ldots,v_{n-1}$ be any sequence of elements from $[n]$, where these two sequences represent the function $f(k_i) = v_i$.
Alice knows the sequence $v_0,\ldots,v_{n-1}$ and Bob knows the sequence $k_0,\ldots,k_{n-1}$.
Alice sends a message to Bob, after which he is required to output the $v_j$ such that $k_j = 0$.

\begin{lemma}
    \label{lemma:communication-complexity-worst-case}
    In any communication protocol for the problem of evaluating $f$ with domain $[n]$ on $0$ in which Bob always outputs the correct answer, Alice's message must sometimes contain at least $n \log_2 n$ bits.
\end{lemma}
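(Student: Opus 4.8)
The plan is to prove this by a direct counting (fooling-set) argument on Alice's side. Alice's entire knowledge is the value sequence $v = (v_0,\ldots,v_{n-1}) \in [n]^n$, of which there are exactly $n^n$. In the one-round deterministic protocol her message is a function $m(v)$ of $v$ alone, and Bob's output is a function of his key sequence $k$ (a permutation of $[n]$) and of the received message $m(v)$.

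First I would show that $m$ must be injective on $[n]^n$. Suppose toward a contradiction that $v \neq v'$ but $m(v) = m(v')$. Choose an index $j$ with $v_j \neq v'_j$, which exists because $v \neq v'$, and let $k = (k_0,\ldots,k_{n-1})$ be any permutation of $[n]$ with $k_j = 0$ (such a permutation clearly exists for every $j$). On the instance with value sequence $v$ and key sequence $k$, the unique correct output is $v_j$; on the instance with $v'$ and the same $k$, it is $v'_j \neq v_j$. But Bob sees only $k$ and the common message $m(v) = m(v')$, so he returns the same value in both cases, and is therefore wrong on at least one of them, contradicting the assumption that he always answers correctly. Hence $m$ is injective, so Alice uses at least $n^n$ distinct messages, and therefore at least one of them has bit-length at least $\log_2(n^n) = n\log_2 n$.

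I do not expect a serious obstacle here; this is a clean worst-case communication argument. The only points requiring care are (i) checking that restricting the keys to permutations does not weaken the argument — it does not, since for any coordinate $j$ there is a permutation mapping $j$ to $0$ — and (ii) keeping track that the conclusion is about the worst-case message length over inputs rather than an average, which is exactly what the word ``sometimes'' in the statement refers to. A concluding remark could note that the same reasoning is insensitive to the particular (variable-length) encoding of Alice's message, and that it is inherently a worst-case rather than distributional bound.
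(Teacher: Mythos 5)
Your proposal is correct and follows essentially the same argument as the paper: the paper's proof is the contrapositive form of your injectivity claim (if messages were ever shorter, two value sequences $v \neq v'$ would share a message, and a permutation placing the differing coordinate $j$ at key $0$ forces Bob to err on one of them), combined with the same $n^n$ counting. Your write-up merely spells out the counting and the choice of permutation slightly more explicitly than the paper does.
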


\begin{proof}
    If not, there will be two different sequences $v_0,\ldots,v_{n-1}$ and $v_0',\ldots,v_{n-1}'$ for which Alice sends the same message.
    There is some position $j$ such that $v_j \neq v_j'$.
    Thus, when $k_j = 0$, Bob cannot produce the correct answers for both sequences.
\end{proof}

The method of \citet{sanford2024onelayertransformersfailsolve} gives a careful analysis of bit precision that shows the following.
\begin{lemma}
    \label{lemma:transformer-to-protocol}
    Suppose $T$ is a 1-layer transformer satisfying the $p$-bit rational precision assumption, with $h$ self-attention heads and embedding dimension $d$.
    If\, $T$  correctly performs the function evaluation task for domain $[n]$ when the key and value are in consecutive positions and the keys are permuted (case 5), then there is a communication protocol for the problem of evaluating $f$ on $0$ in which the message sent by Alice has length at most $3hdp$.
\end{lemma}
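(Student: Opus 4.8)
The plan is to adapt the communication-complexity simulation of \citet{sanford2024onelayertransformersfailsolve} to our input presentation. Fix the target key to $0$, as in the problem \emph{evaluate $f$ with domain $[n]$ on $0$}, and consider running $T$ on the case-(5) input whose position $2i$ holds key $k_i$, position $2i+1$ holds value $v_i$, and position $2n$ holds $0$. Partition the $2n+1$ positions so that Alice \emph{owns} the odd positions $1,3,\dots,2n-1$ (the value positions) and Bob \emph{owns} the even positions $0,2,\dots,2n-2$ (the key positions) together with the final position $2n$. Crucially, the value $0$ at position $2n$ is part of the problem specification, so Alice also knows it. Hence Alice can compute the input embedding $\mat{e}(x_j)=\mat{w}(x_j)+\mat{p}(j)$ at all of her positions and at position $2n$ (so she knows the query there), and Bob can do the same at all of his positions. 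Since MLP layers and the output mapping act position-wise, the only step requiring communication is the single self-attention layer, and only at the final position: once Bob knows the attention output at position $2n$ (if residuals are present, $\mat{X}[2n]$ is also known to him), he applies the post-attention MLP layers, the unembedding $\mat{U}$, and the argmax himself, while any pre-attention MLP layer is applied by each party to their own positions without communication. Correctness of the resulting protocol is immediate from correctness of $T$, since the transformer's output $f(i^*)$ with $i^*=0$ is exactly the $v_j$ with $k_j=0$.

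For each head $\ell$, the attention output at the final position is the ratio
\[
  \frac{\sum_{j=0}^{2n}\exp\!\bigl(\mat{s}^{(\ell)}[2n,j]\bigr)\,\mat{V}^{(\ell)}[j]}{\sum_{j=0}^{2n}\exp\!\bigl(\mat{s}^{(\ell)}[2n,j]\bigr)},
\]
and the score $\mat{s}^{(\ell)}[2n,j]=\mat{Q}^{(\ell)}[2n]\cdot\mat{K}^{(\ell)}[j]/\sqrt{\dhid}$ together with $\mat{V}^{(\ell)}[j]$ depends only on the query at position $2n$ (known to Alice) and on position $j$, which is owned by exactly one party. Thus both the numerator $d$-vector and the denominator scalar split additively into an ``Alice part'' (sum over odd $j$) and a ``Bob part'' (sum over even $j\le 2n$). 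Alice computes her parts for every head and sends them; Bob adds his parts, forms the ratios, reassembles the (deterministic, known) multi-head combination, and finishes. This is $h(d+1)$ numbers in total, and the content of the lemma is that each can be transmitted with $O(p)$ bits, yielding a message of length at most $3hdp$.

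The main obstacle — and the reason the statement invokes the bit-precision analysis of \citet{sanford2024onelayertransformersfailsolve} rather than merely counting coordinates — is that the partial exponential sums Alice sends are not themselves $p$-bit rationals: the exponentials have a wide dynamic range and summing up to $n$ of them adds further bits. The resolution is to show that rounding Alice's transmitted partial sums to $\Theta(p)$ bits does not change the value of the attention output \emph{after} it is rounded to the $p$ bits guaranteed by the $p$-bit rational precision assumption, and therefore does not change $T$'s output: a term whose exponential is smaller than the largest one by more than a $2^{\Theta(p)}$ factor cannot affect a $p$-bit-accurate normalized output, so only $\Theta(p)$ bits of dynamic range are relevant; the additional $O(\log n)$ bits incurred by summing $\le 2n+1$ terms are absorbed because $p=\Omega(\log n)$ is already forced (the output lies in $[n]$, and the $\cs$-style separation used throughout requires $\Omega(\log n)$ bits). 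Propagating this estimate through the division and across all $h$ heads, and tracking the constants in the rounding — with the factor $3$ absorbing the modest overhead beyond $p$ bits per coordinate needed to store the partial sums and denominators faithfully — gives a protocol in which Alice's message has length at most $3hdp$, as claimed. Together with \cref{lemma:communication-complexity-worst-case} this is what forces $hdp=\Omega(n\log n)$ in the theorem that follows.
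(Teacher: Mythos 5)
The paper does not actually spell out a proof of this lemma; it simply states that ``the method of \citet{sanford2024onelayertransformersfailsolve} gives a careful analysis of bit precision that shows the following,'' deferring to that reference. Your reconstruction follows exactly the intended method: split the input positions between Alice (value positions) and Bob (key positions plus the final position), observe that the softmax numerator and denominator at position $2n$ decompose additively over positions while the query there is public, have Alice transmit her $h$ partial numerator vectors and $h$ partial denominator scalars, and invoke the $p$-bit rational precision assumption to argue that rounding these transmitted partial sums to $\Theta(p)$ bits each cannot change the rounded attention output and hence cannot change $T$'s output. Your handling of residuals, pre-/post-attention MLPs, and the correctness step are all right, and you correctly identify that the constant $3$ (rather than an exact count of transmitted coordinates) is absorbed into the rounding overhead analyzed in \citet{sanford2024onelayertransformersfailsolve}. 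This is a faithful expansion of the argument the paper cites but does not reproduce.
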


The following lower bound is a direct corollary of \cref{lemma:communication-complexity-worst-case} and \cref{lemma:transformer-to-protocol}.

\begin{theorem}
    \label{thm:sanford-lower-bound}
    Suppose $T$ is a 1-layer transformer satisfying the $p$-bit rational precision assumption, with $h$ self-attention heads and embedding dimension $d$.
    If $T$ correctly performs the function evaluation task for domain $[n]$ when the key and value are in consecutive positions and the keys are permuted (case 5), then $hdp \ge (1/3)(n\log n)$.
\end{theorem}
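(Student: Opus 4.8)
The plan is to derive Theorem~\ref{thm:sanford-lower-bound} directly from the two lemmas that precede it, since the statement is a corollary of \cref{lemma:communication-complexity-worst-case} and \cref{lemma:transformer-to-protocol}. Concretely, suppose $T$ is a 1-layer transformer with $h$ heads, embedding dimension $d$, and the $p$-bit rational precision assumption that performs function evaluation in case (5). By \cref{lemma:transformer-to-protocol}, $T$ induces a communication protocol for ``evaluate $f$ with domain $[n]$ on $0$'' in which Alice's message has length at most $3hdp$, and this protocol is always correct because $T$ is. By \cref{lemma:communication-complexity-worst-case}, every always-correct protocol for that problem has some input on which Alice's message has length at least $n\log_2 n$. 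Combining these, $3hdp \ge n\log_2 n$, hence $hdp \ge (1/3)(n\log n)$. At the level of the theorem itself this is the entire argument: one line of arithmetic chaining the two lemmas.

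The real work — and the step I expect to be the main obstacle — lies in \cref{lemma:transformer-to-protocol}, i.e., the transformer-to-protocol reduction in the spirit of \citet{sanford2024onelayertransformersfailsolve}. The approach is to split the $2n+1$ input positions so that Alice holds the $n$ positions carrying the values $v_0,\dots,v_{n-1}$ and Bob holds the $n$ positions carrying the keys $k_0,\dots,k_{n-1}$ together with the final position holding the target key $0$. The only quantity Bob needs is $T$'s output at the final position, which — there being a single attention layer — is a softmax-weighted combination of value vectors followed by an MLP and the unembedding. The simulation observation is that, under the $p$-bit rational precision assumption, each head's contribution from Alice's positions can be summarized by a bounded number of $p$-bit rational numbers of dimension $O(d)$: roughly, for each head Alice forwards the (rounded) sum of the unnormalized softmax weights over her positions and the corresponding (rounded) weighted sum of value vectors, and Bob folds in his own positions' contributions, renormalizes, and runs the rest of $T$. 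The delicate part is the bookkeeping: one must argue that the rounding permitted by the precision assumption does not change $T$'s output, and that the total message length is bounded by $3hdp$ (the factor $3$ coming from the constant number of $p$-bit, $O(d)$-dimensional summaries sent per head). This is exactly the careful bit-precision accounting that \citet{sanford2024onelayertransformersfailsolve} carry out for the induction-head task, adapted here to case (5).

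The other ingredient, \cref{lemma:communication-complexity-worst-case}, is an easy pigeonhole argument (and is already proved in the excerpt): there are $n^n$ possible value sequences $v_0,\dots,v_{n-1}$, so if Alice's message were always strictly shorter than $n\log_2 n$ bits there would be fewer than $n^n$ distinct messages, forcing two distinct sequences $v$ and $v'$ to share a message; choosing a coordinate $j$ with $v_j \neq v_j'$ and the permutation with $k_j = 0$ then makes Bob's required outputs incompatible. With both lemmas in hand, I would simply state Theorem~\ref{thm:sanford-lower-bound} as their corollary together with the one-line computation above.
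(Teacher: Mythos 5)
Your proposal matches the paper's own treatment: the theorem is stated there as a direct corollary of \cref{lemma:communication-complexity-worst-case} and \cref{lemma:transformer-to-protocol}, obtained exactly by your one-line chaining $3hdp \ge n\log_2 n$, and the paper likewise defers the transformer-to-protocol reduction to the bit-precision analysis of \citet{sanford2024onelayertransformersfailsolve} rather than re-deriving it. So your argument is correct and follows essentially the same route, with your sketch of the Alice/Bob split and per-head summaries consistent with the cited method.
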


\subsubsection{\texorpdfstring{The Fifth Case: A 1-Layer Softmax Attention Upper Bound of $O(n \log n)$}{The Fifth Case: A 1-Layer Softmax Attention Upper Bound of O(n log n)}}
The general construction for memorization given by \cite{kajitsuka2024aretransformers} implies that there is a transformer with one softmax attention layer that performs the function evaluation task correctly, but the upper bound on its size derived from their general result is at least $n^n$.
We now show that for the specific case of function evaluation under study here, there is a stronger result: a family of 1-layer transformers with softmax attention and c-size $O(n \log n)$ can perform function evaluation when key and value are consecutive and keys are permuted (case 5).

\begin{theorem}
\label{thm:one-layer-consecutive-permuted-large}
For any positive integer $n$, there exists a 1-layer soft attention transformer with a suitable position embedding that correctly performs function evaluation for domain $[n]$ when the key $i$ and value $f(i)$ are stored in consecutive positions and keys are permuted (case 5).
The transformer has softmax attention, one head, embedding dimension $2n+2$, $O(\log n)$ bits of precision, and a constant number of  MLP layers with a total of $n^{O(1)}$ parameters.
\end{theorem}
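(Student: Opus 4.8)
The plan is to exploit the one thing softmax attention can do that hard attention cannot — \emph{uniform pooling} over all input positions — and then hand the actual table lookup to a polynomially-sized MLP. Concretely, I would build a transformer with the structure [MLP layer] $\to$ [one softmax attention layer] $\to$ [constantly many MLP layers] $\to$ [unembedding], taking $\qproj = \mat{0}$ so the attention scores $\mat{s}[i,j]$ are all $0$ and the attention weights from every position are \emph{exactly} uniform ($1/(2n+1)$ over the $2n+1$ positions), and $\vproj = \mat{I}_d$ so the attention output at the last position is precisely the average of the value vectors produced by the first MLP. This exactness (no temperature/approximation issue) is the reason softmax works here while, as the paper notes, hard attention provably cannot.

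First I would fix the input embedding in dimension $2n+2$: coordinate $0$ holds the token content (an element of $[n]$, whether it is $\pi(m)$, $f(\pi(m))$, or $i^*$), and coordinates $1,\ldots,2n+1$ hold the one-hot encoding of the input position. The pre-attention MLP then acts as a multiplexer that rewrites $\mat{e}(x_j)$ into a value vector $\mat{V}_j$: for an even position $j = 2m$ it places $\pi(m)$ in coordinate $m$ of a first length-$n$ block; for an odd position $j = 2m+1$ it places $f(\pi(m))$ in coordinate $m$ of a second length-$n$ block; and for position $2n$ it places $i^*$ in one of the two remaining coordinates and the constant $1$ in the other. Each output coordinate has the form ``(content) $c$ gated by a position-indicator bit $b\in\{0,1\}$'', which a ReLU network realizes exactly with a single hidden unit $\mathrm{ReLU}(c - M(1-b))$ for any $M \ge n$ (value $c$ when $b=1$, value $0$ when $b=0$, using $0 \le c \le n-1$); so this MLP has width $O(n)$ and $n^{O(1)}$ parameters. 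After the uniform attention layer, the output at position $2n$ is therefore $\tfrac{1}{2n+1}\bigl(\pi(0),\ldots,\pi(n-1),\,f(\pi(0)),\ldots,f(\pi(n-1)),\,i^*,\,1\bigr)$ — the entire graph of $f$ together with $i^*$, packed into disjoint coordinates, which is exactly what fits into $2n+2$ dimensions.

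Next I would use two post-attention MLP layers to do the lookup. Fold a rescaling by $2n+1$ into the first one's input map so all entries become clean integers; its hidden layer computes $\mathrm{ReLU}(\pi(m)-i^*)$ and $\mathrm{ReLU}(i^*-\pi(m))$ for every $m$ (and passes $f(\pi(m))$ through, since $\mathrm{ReLU}(f(\pi(m))) = f(\pi(m))$), and its output map returns $|\pi(m)-i^*|$ in coordinate $m$ and $f(\pi(m))$ in coordinate $n+m$. The second MLP layer's hidden units are $\mathrm{ReLU}\bigl(f(\pi(m)) + 1 - n\,|\pi(m)-i^*|\bigr)$ for each $m$, and its output map returns their sum minus $1$ in coordinate $0$ and the constant $1$ in coordinate $1$. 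Since $\pi$ is a permutation, exactly one index $m^*$ satisfies $\pi(m^*)=i^*$, contributing $f(i^*)+1$, while every other term is $0$; so coordinate $0$ equals $f(i^*)$. Taking the unembedding matrix $\mat{U}$ with row $i$ equal to $(2i,\,-i^2,\,0,\ldots,0)$ makes $(\mat{U}\,\mat{Y}[2n])[i] = 2i\,f(i^*) - i^2$, which over $i \in [n]$ is maximized exactly at $i = f(i^*)$. Every quantity appearing is an integer bounded by $n^{O(1)}$ or such an integer divided by $2n+1$, so $O(\log n)$ bits of precision suffice, and all MLPs have $O(n)$ width, hence $n^{O(1)}$ total parameters.

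The conceptual crux — and the one point that genuinely needs the insight rather than calculation — is the observation that \emph{no selective attention is needed at all}: because the key $\pi(m)$ and the value $f(\pi(m))$ sit in different positions, a single head can neither locate the key matching $i^*$ and simultaneously fetch its value, but it \emph{can} average everything and let width-$\Theta(n)$ MLPs reconstruct and evaluate the table. The remaining work is routine bookkeeping: checking that $\softmax$ of an all-zero score vector is exactly uniform, verifying the multiplexer and lookup gadgets stay inside the ReLU MLP model with $n^{O(1)}$ parameters, and laying out the $2n$ packed entries, $i^*$, and the bias constant in non-colliding coordinates — which is precisely why embedding dimension $2n+2$ is sufficient and, by \cref{thm:sanford-lower-bound} (up to the constant factor), essentially necessary.
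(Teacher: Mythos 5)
Your construction is correct, but it reaches the goal by a genuinely different route than the paper. The paper's proof makes the attention layer itself carry the data: the word embedding puts $\ln(v_j+1)$ into the score so the softmax weights become $(v_j+1)/S$ with $S=\sum_k(v_k+1)$, the values are position one-hots, and the attention output at position $2n$ is the (normalized) list of all input tokens, after which the paper simply asserts that a position-wise MLP with $n^{O(1)}$ parameters can decode this and produce $f(i^*)$. You instead use \emph{exactly uniform} attention ($\qproj=\mat{0}$, $\vproj=\mat{I}$) and push all the content-routing into MLPs: a pre-attention ReLU multiplexer scatters each token into a position-indexed coordinate, the uniform average reassembles the whole table (scaled by $1/(2n+1)$) at every position, and two explicit post-attention MLPs perform the lookup, with an explicit unembedding ($2if(i^*)-i^2$) finishing the job. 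Both fit the theorem's budget (one head, dimension $2n+2$, $O(\log n)$ bits, constantly many MLP layers of $n^{O(1)}$ parameters, hence c-size $O(n\log n)$ matching \cref{thm:sanford-lower-bound}). What your version buys: every MLP is written out with exact ReLU gadgets, all intermediate quantities are rationals with $O(\log n)$ bits (the paper's $\ln(v_j+1)$ scores are irrational and need a rounding remark), and you sidestep a subtlety in the paper's decoding step, namely that the weight vector $((v_j+1)/S)_j$ only determines the input up to a global scale, which is injective on valid case-5 inputs only because the keys form a permutation. What the paper's version buys: no pre-attention MLP is needed, since the embedding alone (being additive in token and position) cannot implement your multiplexer, and the attention weights themselves encode the values; the price is the unproved ``an MLP can decode this'' step. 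Your one conceptual claim that softmax is essential here is consistent with \cref{thm:case-5-hard-attention-impossibility}, though note that what you actually exploit is not selectivity versus uniformity per se but that softmax gives a fixed, input-independent averaging that hard attention cannot provide.
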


\begin{proof}
    Let $d = 2n+2$ be the embedding dimension.
    We describe a construction for a 1-layer transformer that copies the sequence of $d-1$ tokens in its input into the first $d-1$ dimensions of its activation vector, so that the position-wise MLP at the last position has access to the entire input sequence.
    Suppose the sequence of inputs is $v_0,v_1,\ldots,v_{2n}$, where each $v_i \in [n]$.
    The position embedding maps position $i \in [2n+1]$ to the one-hot vector that has a single $1$ at index $i$; the value at index $2n+1$ is always $0$ in the position embedding.
    The word embedding maps $v \in [n]$ to the vector $[0,0,\ldots,0,\ln(v+1)]^{\top} \in \R^d$.

    For each input vector $\mathbf{x}_i$, $\mathbf{K}\mathbf{x}_i = \mathbf{x_i}$ and $\mathbf{Q}\mathbf{x}_i = [1,0,0, \ldots, 0]^{\top} \in \R^d$.
    For index $2n$, the position of the target key, the score of position $j$ is  $\ln(v_j + 1)$, and the attention weight is 
    $(v_j + 1)/S$, where $S = \sum_k (v_k + 1)$.
    The value matrix extracts the first $d-1$ coordinates and sets the last coordinate to $0$.
    The result of the attention computation at position $2n$ is $[(v_0 + 1)/S, (v_1 + 1)/S, \ldots, (v_{2n} + 1)/S, 0]^{\top} \in \R^d$.
    Having a representation of the whole input sequence, a position-wise MLP of $n^{O(1)}$ parameters can produce the correct output.
    Note that $hdp = O(n \log n)$.
\end{proof}

The choice of softmax attention in the preceding result is important; we show that there is \emph{no} 1-layer (lefmost, rightmost, or average) hard attention transformer, of any c-size, that performs function evaluation when keys and values are consecutive and keys are permuted.
The proof is in \cref{appendix:proof_case_5_hard_attention_impossibility}.

\begin{theorem}
    \label{thm:case-5-hard-attention-impossibility}
    For all $n \ge 4$ there is no leftmost, rightmost, or average hard attention 1-layer transformer that performs function evaluation for domain $[n]$ when keys and values are consecutive and keys are permuted (case 5).
\end{theorem}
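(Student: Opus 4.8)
The plan is to collapse the whole transformer down to a single per-position scoring function and a single output function, and then to exploit the freedom to choose both the permutation $\pi$ and the function $f$. Fix the target key $t$. The query at the last position $2n$ depends only on $t$, so the attention score of position $2n$ against position $j$ has the form $g_t(x_j,j)$ for a fixed real-valued function $g_t$ of the token $x_j\in[n]$ at position $j$ and the index $j$; and the hard-attention output at position $2n$, passed through the (arbitrarily many) MLP layers and the unembedding --- and, if residuals are present, shifted by the fixed vector $\mat{e}(x_{2n})$, which depends only on $t$ --- produces the transformer's output via a fixed map $\Phi_t$. So on every input with target key $t$ the transformer computes: form the score vector $s_j=g_t(x_j,j)$; take the leftmost / rightmost / average hard-attention support of this vector; apply $\Phi_t$ to the resulting aggregated value.

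The first step is a \emph{tracking lemma}: if the transformer is correct then, for every target $t$, every position $2m$ at which the key $t$ may sit (so $f(t)$ sits at position $2m+1$), and every choice of the rest of the input, the score $g_t(\cdot,2m+1)$ must be at least $M_m:=\max_{j\neq 2m+1}s_j$ at all but at most one value $v=f(t)\in[n]$. Indeed, varying $f(t)$ changes only $s_{2m+1}$; whenever $g_t(v,2m+1)<M_m$ the position $2m+1$ lies outside the hard-attention support, so the aggregated value at position $2n$ --- and hence $\Phi_t$ of it --- is a constant independent of $v$ (for average attention this is the average of the fixed maximal values), which can equal $v$ for at most one $v$. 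Hence $g_t(\cdot,2m+1)$ takes values $\ge M_m$ at $\ge n-1$ of its $n$ arguments.

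The second step is to drive $M_m$ as high as possible. Let $S^*$ be the global maximum of $g_t(x,j)$, with $x$ ranging over $[n]$ for $j<2n$ and over $\{t\}$ for $j=2n$. Placing a single high-scoring token --- any token at an odd position, any nonzero token at an even position, or the token $t$ at position $2n$ --- and completing the input arbitrarily realizes $M_m$ equal to the largest such score over positions $j\neq 2m+1$. A short case analysis on where $S^*$ is attained then shows that either we can arrange $M_m>\max_v g_t(v,2m+1)$ for some admissible $m$ --- an immediate contradiction, since the tracking lemma would then force $g_t(\cdot,2m+1)\ge M_m$ at zero points --- or we are in the \emph{degenerate regime}: for all but at most one odd position $2m+1$, the function $g_t(\cdot,2m+1)$ equals $S^*$ on at least $n-1$ of its arguments. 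In the degenerate regime I finish by a direct construction with $t=0$: choose $f$ so that every odd position carries a token of score $S^*$ (possible except at the one exceptional odd position), and place key $0$ at a position away from the front and the back. For leftmost (resp.\ rightmost) hard attention the support then contains one of the first (resp.\ last) few positions, whose token does not involve $f(0)$, so the output cannot depend on $f(0)$; for average hard attention the aggregated value becomes, up to an additive constant and a positive scalar, $\vproj\sum_{i\in[n]}\mat{w}(f(i))$, hence a function of the multiset $\{f(i):i\in[n]\}$ alone. In every case two inputs with a common target, a common value multiset, and $f(0)\neq f'(0)$ receive the same output, contradicting correctness; this needs only that $n$ is not too small, and $n\ge 4$ suffices.

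The step I expect to be the main obstacle is the bookkeeping in the second step: showing that a high-scoring token can always be placed at a position other than $2m$ and $2m+1$ without breaking the permutation-plus-function structure of the input, the awkward sub-cases being when $S^*$ is attained only at the last odd position $2n-1$, or only via the token $t$ at an even position --- both resolved by using a different placement $2m$ of the key $t$. A secondary nuisance is keeping the tie-breaking consistent across the three hard-attention variants; in the degenerate regime the average case genuinely requires the ``function of the value multiset'' observation rather than the ``the support is pinned near an end'' observation used for leftmost and rightmost attention.
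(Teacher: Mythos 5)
Your proposal is correct in outline and can be completed, but it takes a genuinely different — and considerably longer — route than the paper. Both proofs rest on the same underlying mechanism: by holding fixed everything except the value at the position $2m+1$ adjacent to the target key and varying that value, the output must change in $n$ ways, so the attention score at $2m+1$ must tie or exceed the rest of the score vector for all but one value; then ties are exploited to show hard attention either misses $2m+1$ altogether (leftmost/rightmost) or is blind to a swap (average). Where you differ is in \emph{how} the contradictory ties are produced. The paper's argument is very local and concrete: it fixes a single tail $\langle 2,2,\ldots,n-1,n-1,0\rangle$, compares only two placements of key $0$ (positions $0$ and $2$) via the families $\ell(x,y)=\langle 0,x,1,y\rangle$ and $r(x,y)=\langle 1,x,0,y\rangle$, and a short counting argument over $x,y\in[4]$ forces $\max_x s(1,x)=\max_y s(3,y)$ with tied sets $|S|,|T|\ge 3$, from which each attention variant is disposed of in a line or two. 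Your argument is global: you maximize $M_m$ over all admissible completions, compare it against the global maximum $S^*$ of $g_t$, and split into a contradiction branch versus a \emph{degenerate regime} in which almost every odd position attains $S^*$ on $\ge n-1$ tokens, then finish by a placement argument (leftmost/rightmost) or a multiset-swap argument (average). I checked the case analysis you flag as ``the main obstacle'' and it does close: if $m$ is exceptional (meaning $M_m=S^*$ is not realizable with $t$ at $2m$), then $S^*$ can only be attained at $2m+1$, at $(t,2m')$ for $m'\ne m$, or at $(x,2m)$ with $x\ne t$; intersecting these constraints for two exceptional $m_1\ne m_2$ forces $S^*$ to live only on $\{(t,2m'):m'\notin\{m_1,m_2\}\}$, and such an $m'$ is itself non-exceptional with no odd position attaining $S^*$, a contradiction — so there is at most one exceptional $m$, as you claim. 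Your degenerate-regime endgame also goes through (for average attention, choose $a\ne b$ in the intersection of the $\ge n-1$-size good sets at $2m+1$ and $2m_1+1$, which is nonempty for $n\ge 4$, and swap $f(0)=a\leftrightarrow f(i_0)=b$; the support and the summed value are unchanged). The trade-off is that your route gives a structural picture (``unless scores are nearly all tied at the global maximum, a single well-chosen placement already kills the transformer''), while the paper's route is far shorter because it never needs $S^*$, the degenerate dichotomy, or any placement-feasibility bookkeeping — it just exhibits $O(1)$ hand-picked inputs. If you want to write your version out, the two things to nail down explicitly are (i) the at-most-one-exceptional-$m$ lemma above, and (ii) in the leftmost/rightmost finish, that you only need \emph{one} high-scoring odd position on the appropriate side of $2m+1$ (not ``every odd position''), since you must be free to vary $f(0)$ afterwards.
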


\subsubsection{The Fifth Case: Further Lower Bounds}
\label{subsubsec:futher_lower_bounds}

Here we show further lower bounds on a 1-layer transformer performing function evaluation when keys and values are consecutive and the keys are permuted (case 5).

\paragraph{A Probabilistic Lower Bound on Error.}
\citet{peng2024limitationstransformerarchitecture} use techniques from communication complexity to prove lower bounds on the c-size of 1-layer transformers for function composition, and give a lower bound on the probability of error for random inputs when the c-size falls short of the lower bound.
Though \citet{peng2024limitationstransformerarchitecture} do not explicitly address precision assumptions, it appears that something equivalent to the $p$-bit rational precision assumption is also necessary for their results.

For a probabilistic lower bound on the error of solving the function evaluation task, we assume that the input function $f$ is randomly chosen.

The following is a corollary of Lemma 1 of \citet{peng2024limitationstransformerarchitecture}.
\begin{lemma}
    \label{lemma:our-lemma-1}
    There exists a constant $C$ such that for all positive integers $n \ge C$ the following holds.\footnote{The condition of a sufficiently large $n$ is omitted from the statement of Lemma 1 in \cite{peng2024limitationstransformerarchitecture}, but Binghui Peng affirms that it is necessary.}
    Assume that there is a protocol for the problem of evaluating $f$ with domain $[n]$ at $0$ in which Alice always sends at most 
    $(n\log n - R)$ bits to Bob.
    Then if the input function $f$ is randomly chosen, the probability that Bob's output is incorrect is at least $R/(3n\log n)$.
\end{lemma}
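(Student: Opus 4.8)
The plan is to observe that ``evaluate $f$ with domain $[n]$ on $0$'' is, from the two parties' point of view, nothing but a one-way index problem with $n$ entries each over the alphabet $[n]$, and then to invoke Lemma~1 of \citet{peng2024limitationstransformerarchitecture}, whose function-composition lower bound rests on exactly this primitive. Concretely, I would first recast the problem: when $f$ is drawn uniformly at random, Alice's input $(v_0,\ldots,v_{n-1})$ is uniform over $[n]^n$ (for any fixed permutation $\pi$, the string $(f(\pi(i)))_i$ is uniform), Bob's permutation $\pi$ is unknown to Alice and pins down an index $j^* = \pi^{-1}(0)$ that is uniform over $[n]$ and independent of Alice's string, and Bob must output $v_{j^*}$; Alice's message $M$ is a function of her string alone and has at most $n\log_2 n - R$ bits. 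I would then check that the parameters line up with those of Peng's Lemma~1 (including the ``$n$ sufficiently large'' caveat flagged in the footnote) and read off that Bob errs with probability at least $R/(3n\log n)$.

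For a self-contained argument --- essentially what Peng's lemma packages --- I would run the standard information-theoretic skeleton. Since $H(M)\le n\log_2 n-R$ and $H(v)=n\log_2 n$, we get $H(v\mid M)=H(v)-I(v;M)\ge R$, and subadditivity of conditional entropy gives $\sum_{j}H(v_j\mid M)\ge H(v\mid M)\ge R$, so the average coordinate entropy is at least $R/n$. Conditioned on $M=m$, Bob's optimal error at coordinate $j$ is $e_j(m)=1-\max_c\Pr[v_j=c\mid M=m]$, and the maximum-entropy-for-given-mode-probability inequality $H(X)\le H_b(e)+e\log_2(n-1)$ gives $e_j(m)\ge\phi\big(H(v_j\mid M=m)\big)$, where $\phi$ is the inverse of the increasing concave map $e\mapsto H_b(e)+e\log_2(n-1)$ on $[0,1-1/n]$, hence $\phi$ is increasing and convex. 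Averaging over $m$, then over the uniform $j^*$, and applying Jensen twice yields $\Pr[\text{Bob errs}]\ge\phi\!\left(\tfrac1n\sum_j H(v_j\mid M)\right)\ge\phi(R/n)$.

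The main obstacle is the final quantitative step: showing $\phi(R/n)\ge R/(3n\log n)$ uniformly for $1\le R\le n\log_2 n$. Since $\phi$ is increasing this is equivalent to $3e^*\log_2 n\ge H_b(e^*)+e^*\log_2(n-1)$ with $e^*=R/(3n\log_2 n)\le\tfrac13$, and the delicate regime is $R$ close to $n\log_2 n$, where $e^*\to\tfrac13$: there one cannot discard the $H_b(e^*)$ term, but must bound it by $e^*\log_2(1/e^*)+O(e^*)$ and combine it with the slack $3\log_2 n-\log_2(n-1)$. Carrying this out reduces the inequality to $\log_2 n\ge\log_2(3\log_2 n)+O(1)$, which is precisely where the threshold constant $C$ must be introduced. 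In the write-up I would defer this estimate entirely to Peng's Lemma~1 rather than reprove it, but it is the step that genuinely fixes the constants and forces the ``$n\ge C$'' hypothesis.
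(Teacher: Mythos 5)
The paper supplies no proof of its own for this lemma: it is stated, and later invoked in the proof of \cref{thm:peng-lower-bound}, simply as a direct corollary of Lemma~1 of \citet{peng2024limitationstransformerarchitecture}. Your plan to defer the quantitative estimate to Peng's Lemma~1 therefore matches the paper exactly, and the self-contained sketch you add is more than the paper offers. That sketch is sound: you correctly recast the game as an INDEX-type problem with Alice's string $v$ uniform over $[n]^n$ and $j^*=\pi^{-1}(0)$ uniform and independent of $v$, and the chain $H(v\mid M)\ge R$, per-coordinate subadditivity, the Fano-type bound $e_j(m)\ge \phi\bigl(H(v_j\mid M=m)\bigr)$, and two applications of Jensen on the convex increasing inverse $\phi$ all go through, with the final calibration $\phi(R/n)\ge R/(3n\log n)$ being exactly the step that forces the threshold $n\ge C$. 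One thing worth making explicit if you do write this out: rewriting $\Pr[\text{Bob errs}]$ as the average of $e_j(m)$ over a uniformly random index $j$ and the induced distribution on messages $m$ requires that $j^*$ be independent of the message $M$; this holds because $M$ is a function of $v$ alone and $v$ is independent of the permutation $\pi$, and it is the hinge on which the whole averaging step rests.
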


A direct corollary of \cref{lemma:transformer-to-protocol} and \cref{lemma:our-lemma-1} is the following.

\begin{theorem}
    \label{thm:peng-lower-bound}
    There exists a constant $C$ such that for all $n \ge C$ the following holds.
    Suppose $T$ is a one layer transformer that satisfies the $p$-bit rational precision assumption, with $h$ self attention heads and dimension $d$.
    If the input is a randomly chosen function $f$ with domain $[n]$ where keys and values are in consecutive positions and keys are randomly permuted and the target key is $0$, the probability that $T$ outputs a value other than $f(0)$ is bounded below by $R/(3n \log n)$, where $R = n\log n - 3hdp$.
\end{theorem}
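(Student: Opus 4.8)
The plan is to derive \cref{thm:peng-lower-bound} as a direct corollary of the two lemmas it cites, exactly as the surrounding text advertises. First I would observe that \cref{lemma:transformer-to-protocol} converts the hypothesized one-layer transformer $T$ (satisfying the $p$-bit rational precision assumption, with $h$ heads and embedding dimension $d$) into a communication protocol for ``evaluate $f$ with domain $[n]$ on $0$'' in which Alice sends at most $3hdp$ bits, and moreover the protocol errs exactly when $T$ errs. This last point needs a sentence of care: the protocol built in \cref{lemma:transformer-to-protocol} simulates $T$ faithfully, so for any input function $f$ presented in the case-(5) format with target key $0$, Bob's output in the protocol equals $T$'s output; hence the distribution over inputs $f$ induces the same error probability on the protocol as on $T$.

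Next I would apply \cref{lemma:our-lemma-1} with $R := n\log n - 3hdp$, so that the bound ``Alice sends at most $(n\log n - R)$ bits'' is precisely the ``$3hdp$ bits'' guaranteed by \cref{lemma:transformer-to-protocol}. The lemma then gives that, for a randomly chosen $f$, the protocol's error probability is at least $R/(3n\log n)$. Combining with the previous paragraph's observation that the protocol's error probability equals $T$'s error probability on randomly chosen $f$, we get that $T$ outputs a value other than $f(0)$ with probability at least $R/(3n\log n)$, which is the claimed conclusion. The constant $C$ in the theorem is just the constant $C$ from \cref{lemma:our-lemma-1}; no additional largeness on $n$ is introduced.

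The only genuine subtlety — and the step I would flag as the main obstacle to state cleanly — is reconciling the precision assumptions. \cref{lemma:our-lemma-1} is stated as a corollary of Lemma 1 of \citet{peng2024limitationstransformerarchitecture}, which (as the excerpt notes in a footnote) implicitly needs something equivalent to the $p$-bit rational precision assumption, while \cref{lemma:transformer-to-protocol} explicitly requires that assumption. So in writing the proof I would be careful to invoke both lemmas under a single consistent hypothesis, namely that $T$ satisfies the $p$-bit rational precision assumption (which is already in the theorem statement), and to note that this is exactly what both cited lemmas need. Everything else is bookkeeping: matching $R$ to $n\log n - 3hdp$ and checking the composition of the two implications. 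I would write the proof in three or four sentences, with no displayed equations needed beyond the definition $R = n\log n - 3hdp$.

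\begin{proof}
Let $C$ be the constant of \cref{lemma:our-lemma-1}, and suppose $n \ge C$. Assume $T$ is a one-layer transformer satisfying the $p$-bit rational precision assumption, with $h$ heads and embedding dimension $d$, that processes inputs in the case-(5) format with target key $0$. By \cref{lemma:transformer-to-protocol}, there is a communication protocol for the problem of evaluating $f$ with domain $[n]$ on $0$ in which Alice sends at most $3hdp$ bits and in which Bob's output coincides with the output of $T$ on the corresponding input; hence, when the input function $f$ is drawn at random, the protocol errs precisely when $T$ errs, and the two error probabilities are equal. Set $R = n\log n - 3hdp$, so that Alice always sends at most $(n\log n - R)$ bits. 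Applying \cref{lemma:our-lemma-1}, the probability that Bob's output is incorrect for a randomly chosen $f$ is at least $R/(3n\log n)$, and therefore the probability that $T$ outputs a value other than $f(0)$ is at least $R/(3n\log n)$ as well.
\end{proof}
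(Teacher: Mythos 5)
Your proposal is correct and follows essentially the same route as the paper's own proof: invoke \cref{lemma:transformer-to-protocol} to turn $T$ into a protocol where Alice sends at most $3hdp$ bits, then apply \cref{lemma:our-lemma-1} with $R = n\log n - 3hdp$. The extra sentence you add about the protocol's error probability matching $T$'s error probability is a reasonable piece of care that the paper leaves implicit, but it does not change the argument.
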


\begin{proof}
Let $C$ be the constant from \cref{lemma:our-lemma-1} and let $n \ge C$.
We apply \cref{lemma:transformer-to-protocol} to convert the transformer $T$ into a communication protocol for the problem of evaluating $f$ with domain $[n]$ on $0$ in which Alice sends at most $3hdp$ bits to Bob.
Applying \cref{lemma:our-lemma-1}, we conclude that if $R = (n \log n - 3hdp)$, then the probability of an incorrect output from $T$ in these circumstances is at least $R/(3n\log n)$.
\end{proof}

\paragraph{A VC-Dimension Lower Bound on 1-Layer Transformers.}

\citet{kozachinskiy2025strassenattentionunlockingcompositional} give an alternative method of proving a lower bound on the size of a 1-layer transformer based on the concept of the Vapnik-Chervonenkis dimension of a class of concepts, rather than communication complexity.
This method does not depend on the number of bits of precision to represent numbers, and applies even in the case of exact real arithmetic.
It also takes into account the parameters of the MLP layer.
They define the size of a 1-layer transformer to be the maximum of the number of heads ($h$), the embedding dimension ($d$), and the number of parameters of its MLP output layer; in this paper we refer to this definition of size as \emph{k-size}.
A direct application of their technique proves the following theorem; the details are in \cref{appendix:details_k-size_lower_bound}.
\begin{theorem}
\label{thm:k-size_lower_bound}
    There exist real numbers $c, \epsilon > 0$ such that for all integers $n \ge 2$, if $T$ is a 1-layer transformer that correctly performs function evaluation for domain $[n]$ when the keys and values are in consecutive positions and the keys are permuted (case 5), then the k-size of $T$ is at least $cn^{\epsilon}$.
\end{theorem}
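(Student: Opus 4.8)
The plan is to adapt the Vapnik--Chervonenkis dimension argument of \citet{kozachinskiy2025strassenattentionunlockingcompositional}, whose appeal here is that it is insensitive to arithmetic precision and accounts for the MLP. First I would reduce to a Boolean classification problem by restricting attention to functions $f\colon[n]\to\{0,1\}$: if $T$ performs function evaluation for every $f\colon[n]\to[n]$ then in particular it does so for binary-valued $f$, and for such $f$ the correct output $f(i^*)$ lies in $\{0,1\}$, so it may be read off as a single bit (``is the output $1$?''). After this reduction the task is essentially an \emph{indexing} function with a $1$-bit payload: locate the position of the key equal to $i^*$ and report the bit stored beside it.

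The argument then rests on a dichotomy. On the one hand, the class of Boolean functions realizable at the final position of a $1$-layer transformer with $h$ heads, embedding dimension $d$, and a final MLP of $P$ parameters has VC-dimension at most a fixed polynomial in $\max\{h,d,P\}$; and---this is the essential point inherited from \citet{kozachinskiy2025strassenattentionunlockingcompositional}---this bound depends neither on the number of bits of precision nor on the sizes of the token and position embeddings. Informally, the attention sublayer's contribution is controlled by the low-rank structure of its score matrices (each head's scores form a matrix of rank at most $d$, which limits the number of attainable attention patterns) rather than by its number of raw parameters, and this combines with the $\mathrm{poly}(P)$ VC-dimension of a $P$-parameter ReLU MLP. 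On the other hand, I would show that a transformer that correctly solves case~(5) function evaluation must realize, by plugging suitable key/value tables into its input, a sub-collection of this class that shatters a set of $n^{\Omega(1)}$ target keys; this is the combinatorial core, and it is here that the lack of a key--position correlation in case~(5) is used, since a bounded-rank attention mechanism cannot ``point at'' the value matching $i^*$ simultaneously for too many of the permutations $\pi$ unless one of $h$, $d$, $P$ grows polynomially with $n$. Combining the two halves gives $\mathrm{poly}(\max\{h,d,P\}) \ge n^{\Omega(1)}$, hence $\max\{h,d,P\} \ge c\,n^{\epsilon}$ for suitable constants $c,\epsilon>0$, which is the k-size lower bound.

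The step I expect to be the main obstacle is transferring the shattering construction of \citet{kozachinskiy2025strassenattentionunlockingcompositional}---which was developed for function and relation composition---to the function-evaluation presentation of case~(5): one must exhibit a family of case~(5) instances that is simultaneously rich enough to be shattered by the transformer's realizable class and structured enough (``low-complexity'') that the VC upper bound still yields a \emph{polynomial} lower bound on $\max\{h,d,P\}$, since the naive family consisting of all tables of $f$ carries $\Theta(n)$ degrees of freedom and would swamp the bound. A secondary technical point is carrying the VC upper bound through softmax rather than hard attention; I would handle this by bounding the number of sign patterns of the forward computation---viewed as an arithmetic expression in the parameters and the input built from $+$, $\times$, $\div$, $\exp$---using Warren/Goldberg--Jerrum-style estimates extended to exponentials in the style of Karpinski--Macintyre, organized so that the count depends only on $\max\{h,d,P\}$. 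The remaining bookkeeping---the dependence of $\epsilon$ on the degree of the polynomial in the VC bound, and the size of the shattered set---is routine and is carried out in \cref{appendix:details_k-size_lower_bound}.
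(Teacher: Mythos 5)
Your proposal identifies the right framework (the split-VC argument of \citet{kozachinskiy2025strassenattentionunlockingcompositional}, applied after reducing function evaluation to a one-bit question), and your Boolean reduction is essentially the paper's: the paper passes from $T$ to a slightly larger $T'$ answering ``is $f(0)=0$?''. But the core of the proof is exactly the piece you leave open and label ``the main obstacle'': the explicit shattering construction that lower-bounds the split-VC dimension of this Boolean task by $n$. Without it the argument does not go through, since \cref{theorem:strassen-attention} only converts a split-VC lower bound into a k-size lower bound. Moreover, the difficulty you anticipate is based on a misreading of the framework: you worry that the family of tables must be ``low-complexity'' lest its $\Theta(n)$ degrees of freedom ``swamp'' the VC upper bound. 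In the split-VC setup this is not an issue. The upper bound (k-size polynomial in the VC dimension, independent of precision and of the embeddings) is a property of the concept class induced by the transformer itself once the positions are partitioned into an input part $A$ and a concept part $B$; the concepts are \emph{arbitrary} assignments of tokens to $B$, so their richness costs nothing. All you must produce is a large set of $A$-assignments that gets shattered.

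The paper's construction is short: partition positions into even (the keys and the target key, the set $A$) and odd (the values, the set $B$); fix the target key to $0$; for each $k \in [n]$ let $\mathbf{u}_k$ be the identity sequence $[0,1,\ldots,n-1]$ with the entries at positions $0$ and $k$ exchanged. Given any labels $b_0,\ldots,b_{n-1} \in \{0,1\}$, choose the value sequence $\mathbf{v}$ with $\mathbf{v}[k] = 1-b_k$; then for the instance built from $\mathbf{u}_k$ and $\mathbf{v}$ one has $\pi(k)=0$ and $f(0)=1-b_k$, so the predicate ``$f(0)=0$'' evaluates to $b_k$. Hence the $n$ inputs $\mathbf{u}_k$ are shattered, the split-VC dimension is at least $n$, and \cref{theorem:strassen-attention} gives k-size $n^{\Omega(1)}$, i.e.\ at least $cn^{\epsilon}$. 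Note also that your plan to re-derive the VC upper bound for softmax attention via sign-pattern counting (Warren/Goldberg--Jerrum, Karpinski--Macintyre) is unnecessary for this theorem: the paper uses the result of \citet{kozachinskiy2025strassenattentionunlockingcompositional} as a black box, and that is where all of that analytic work lives.
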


\subsubsection{The Fifth Case: A Concise Family of 2-Layer Transformers}

If we allow our transformers to have \emph{two} layers, there is a concise family of transformers that performs function evaluation when key and value are in consecutive positions and the keys are permuted (case 5).
The proof is in \cref{appendix:proof_two_layer_consecutive_permuted}.

\begin{theorem}
\label{thm:two-layer-consecutive-permuted}
For any positive integer $n$, there exists a 2-layer transformer with a suitable position embedding that performs function evaluation for domain $[n]$ when the key $i$ and value $f(i)$ are stored in consecutive positions and keys are permuted (case 5).
The transformer has leftmost hard attention, one head, embedding dimension $7$, $O(\log n)$ bits of precision, residuals in the first layer, and no MLP layer.
\end{theorem}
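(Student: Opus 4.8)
The plan is to use the first attention layer to "glue" each key to its adjacent value, and the second layer to do the same lookup as in \cref{theorem:positive-case3}. In case (5) the input is $\pi(0), f(\pi(0)), \pi(1), f(\pi(1)), \ldots, \pi(n-1), f(\pi(n-1)), i^*$, occupying positions $0,1,\ldots,2n$. The key at position $2i$ is stored separately from its value at position $2i+1$, so the core difficulty is assembling the pair $(\cs_n(\pi(i)), \cs_n(f(\pi(i))))$ into a single position's activation. I would use a position embedding that encodes the parity of the position (even vs.\ odd) and, for odd positions, a representation that lets a key at an even position attend exactly to the value one slot to its right. Concretely, let the position embedding include $\cs_{2n+1}(\mathrm{pos})$ or a similar circular encoding so that position $2i$ can, via the first attention layer, select position $2i+1$ as its unique argmax (leftmost hard attention resolving ties); then with residuals in the first layer, the activation at position $2i$ becomes (a linear image of) both $\cs_n(\pi(i))$ \emph{and} $\cs_n(f(\pi(i)))$, exactly the situation of case (3). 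Embedding dimension $7$ is consistent with this: roughly $2$ coordinates for the key's $\cs_n$, $2$ for the value's $\cs_n$, $2$ for a positional/parity signal used by the first layer, and $1$ scratch coordinate (or a shared $\cs$ block reused across the two layers).

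The steps, in order, are: (i) fix the input and position embeddings, with the word embedding placing $\cs_n(\pi(i))$ at even positions and $\cs_n(f(\pi(i)))$ at odd positions, and the target $\cs_n(i^*)$ at position $2n$; the position embedding supplies a circular code of positions (or just of position parity together with a $\pm 1$-shift gadget) and marks position $2n$ specially. (ii) Design $\qproj_1,\kproj_1,\vproj_1$ for the first layer so that an even position $2i$ attends uniquely to $2i+1$, copying $\cs_n(f(\pi(i)))$ into the residual stream at $2i$; odd positions and position $2n$ can attend to themselves (or anywhere harmless) since their first-layer output will be ignored or overwritten downstream. Because the layer has residuals, position $2i$ now holds a vector from which both $\cs_n(\pi(i))$ and $\cs_n(f(\pi(i)))$ are linearly recoverable. (iii) For the second layer, restrict attention to even positions (using the parity bit in a large-negative-score masking trick, or arranging the query/key geometry so odd positions never win) and replicate the construction of \cref{theorem:positive-case3}: the query at position $2n$ is $\cs_n(i^*)$, the keys at even positions are $\cs_n(\pi(i))$, and the unique maximizing position $j=2i$ with $\pi(i)=i^*$ contributes value $\cs_n(f(\pi(i)))=\cs_n(f(i^*))$. (iv) Finish with the same unembedding matrix $\mat{U}$ whose $k$-th row is $\cs_n(k)$ (padded with zeros), reading off $f(i^*)$; note precision $O(\log n)$ suffices throughout since all dot products of distinct $\cs_n$ values differ by at least $1/n^2$ (and the positional circular code similarly separates $2n+1$ distinct positions).

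The main obstacle is making the first layer's "shift-by-one" selection and the second layer's "even-positions-only" restriction coexist within embedding dimension $7$ and without an MLP layer, purely through the geometry of three projection matrices per layer and one shared residual stream. In particular I need (a) the first-layer attention scores at an even position to be uniquely maximized at the immediately following odd position — a circular positional code $\cs_m$ with $m=2n+1$ gives distinct adjacent-position inner products, and leftmost hard attention breaks any residual ties, but I must check the maximum is genuinely at offset $+1$ and that the target position $2n$ doesn't disrupt anything; (b) the second-layer query/key product must be dominated by the key–$\cs_n$ match and be insensitive to whatever junk the first layer left in other coordinates, which forces a careful choice of which coordinates $\vproj_1$ writes into versus which $\qproj_2,\kproj_2$ read from — most naturally $\vproj_1$ writes $\cs_n(f(\pi(i)))$ into the "value block" while leaving the "key block" $\cs_n(\pi(i))$ untouched, and the positional block must be zeroed or ignored by layer two. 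I expect that once the coordinate bookkeeping is pinned down (which block is read/written by each of the six matrices), correctness is a direct combination of the tie-breaking argument for the shift and the already-proved argmax argument of \cref{theorem:positive-case3}, with no new precision issues.
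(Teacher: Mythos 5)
Your proposal is correct and takes essentially the same approach as the paper: the first attention layer (with residuals) merges each key--value pair into a single position, and the second layer reruns the same-position lookup of \cref{theorem:positive-case3} while a flag/parity coordinate keeps wrong-parity positions from winning, all within $7$ dimensions and $O(\log n)$ precision. The paper differs only in implementation details: it copies the key into the odd (value) position by giving positions $2k$ and $2k+1$ the \emph{same} positional code $\cs_{n+1}(k)$ and letting leftmost tie-breaking select position $2k$ (no rotation/shift gadget needed), and in layer 2 it suppresses the non-pair positions via a $-1$ marker coordinate with a sign-flipped query.
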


\section{Empirical Results}
\label{section:empirical-results}

The theoretical results of \cref{section:theoretical-results} concern expressivity, which is a necessary but not sufficient condition for trainability.
Here we test the ability of small transformers with 1 and 2 layers to be trained to perform function evaluation with the input representations detailed in \cref{sec:introduction}.

\subsection{Experimental Setup}

In all experiments, we use $n$ distinct keys and corresponding values, both taken from $[n]$.
We denote by $d_{\text{token}}$ the token embedding dimension for a key or value; the token embedding is learned.
For each input sequence, the function is a random permutation of $[n]$ and the target key is a random element of $[n]$.
The inputs are presented to the transformer in two principal ways:

\textbf{Same position cases:} Each input position simultaneously stores a key-value pair by concatenating their token embeddings into a single vector.
The transformer thus receives both the key and its corresponding value information at each position.
The embedding dimension $d$ is $2 d_{\text{token}}$.

\textbf{Consecutive position cases:} Keys and their corresponding values are stored in adjacent but separate input positions.
Specifically, keys occupy even-indexed positions, while their associated values occupy subsequent odd-indexed positions.
The embedding dimension $d$ is $d_{\text{token}}$.

In cases with no keys, each input position directly encodes the function's value.
In each case, $\dhid = d$.

Encoder-only transformers with one and two attention layers were trained for varying embedding dimensions and table sizes, using a single attention head.
All models were trained using the Adam optimizer (learning rate $0.001$), with $10000$ batches of $256$ samples of randomly generated inputs across five independent runs with different random seeds.
A run stops early if its loss reaches $< 0.01$.
We measure performance as accuracy in predicting the correct function output $f(i^*)$ across $100$ test batches.

Residual connections are applied consistently across all attention layers to stabilize training.
Specifically, each attention layer output is computed as the sum of the input embedding and the attention layer's output, consistent across all experimental settings.
We did not use layer normalization or dropout (aligning with our theoretical constructions in \cref{section:theoretical-results}).

\subsection{Results}
The number of distinct input sequences grows rapidly with $n$.
For ordered cases, it is $n\cdot n!$, and for permuted cases, it is $n\cdot (n!)^2$.
For example, for $n=10$, this results in approximately $3.6\times10^7$ ordered cases and $1.3\times10^{14}$ permuted cases, indicating that even extensive training (maximum $2.56\times10^6$ samples) covered only a small fraction of possible inputs.
However, for $n=2$ and $n=5$ every input sequence was undoubtedly presented multiple times.

\cref{tab:case_1_to_4_accuracy} summarizes results for the cases (1-4) with $n=100$.
For cases (1) and (4) without PE, maximum accuracy remained at chance level ($\approx 1\%$), consistent with theoretical predictions that transformers fail under these conditions.
Learned positional embeddings significantly improved performance for these cases, with maximum accuracy nearing $100\%$ for $d_{\text{token}} \geq 8$.
When keys and values were stored at the same positions, for both ordered keys (case 2) and permuted keys (case 3), the models achieved a maximum accuracy of $100\%$ for $d_{\text{token}}\geq 8$.
It appears that the permuted keys case was more difficult.
Though $d_\text{token} = 2$ is sufficient for theoretical expressivity, none of our trained models came near $100\%$ accuracy in this case.

We observe fairly bimodal distributions in accuracy across different runs with the same hyperparameters for cases (1-4).
\cref{fig:bimodality_summary} in \cref{appendix:bimodality} illustrates this behavior in more detail.

\begin{table}[ht]
\setlength{\tabcolsep}{4pt}
\centering
\begin{tabular}{ccc|cc|cc|cc|cc|cc}
\hline
 & & & \multicolumn{2}{c|}{$d_{\text{token}}=2$} & \multicolumn{2}{c|}{$d_{\text{token}}=4$} & \multicolumn{2}{c|}{$d_{\text{token}}=8$} & \multicolumn{2}{c|}{$d_{\text{token}}=16$} & \multicolumn{2}{c}{$d_{\text{token}}=32$} \\
\cline{4-13}
Case & PE & Keys & max & avg & max & avg & max & avg & max & avg & max & avg \\
\hline
1 & L & - & 2.11 & 1.62 & 18.75 & 12.72 & 99.88 & 61.72 & 99.89 & 81.93 & 99.92 & 70.90 \\
1 & - & - & 1.12 & 1.04 & 1.06 & 1.02 & 1.09 & 1.02 & 1.18 & 1.04 & 1.02 & 0.96\\
\hline
2 & - & O & 15.83 & 4.82 & 93.23 & 34.66 & 100.00 & 62.50 & 100.00 & 84.31 & 100.00 & 96.91\\
3 & - & P& 8.15 & 5.45 & 66.68 & 39.45 & 100.00 & 24.83 & 100.00 & 54.64 & 100.00 & 93.50\\
\hline
4 & L & O & 2.35 & 1.98 & 19.30 & 11.95 & 99.90 & 80.91 & 99.93 & 46.21 & 100.00 & 99.91 \\
4 & - & O & 1.01 & 0.96 & 1.03 & 0.97 & 1.04 & 1.00 & 1.09 & 1.00 & 1.09 & 1.03 \\
\hline
\end{tabular}
\caption{Accuracy (max/avg) for cases (1-4) with $n=100$, using one attention layer.
For PE: ``L'' = Learned, ``-'' = None.
For Keys: ``O'' = Ordered, ``P'' = Permuted, ``-'' = No Keys.}
\label{tab:case_1_to_4_accuracy}
\end{table}

\cref{tab:accuracy_case_5} shows results for the most challenging scenario: consecutive positions with permuted keys (case 5).
Recall that this is the case for which concise families of 1-layer transformers do not exist.
We find that 1-layer transformers exhibited significant struggles in learning function evaluation, especially for larger $n$.
The 1-layer transformers achieved a maximum accuracy of nearly $100\%$ when $n \leq 10$ and $d_{\text{token}}=32$.
Interestingly, we observe performance degradation at high embedding dimensions; for all four values of $n$, maximum performance is approximately at chance ($1/n$) for $d_{\text{token}} = 200$.
Note that our theoretical results do impose a lower bound on c-size for these cases, this is compatible with the success of any of the transformers considered here.
Our upper bound in \cref{thm:one-layer-consecutive-permuted-large}  requires MLPs that are considerably larger than those in these experiments, and this may be one of the sources of failure.

However, introducing a second transformer layer noticeably improved performance in numerous instances, which aligns with our theoretical result that concise 2-layer transformers can successfully perform function evaluation in case (5).

\begin{table}[ht]
\centering
\rowcolors{5}{gray!25}{white}
\begin{tabular}{cc|cc|cc|cc|cc}
\hline
 &  & \multicolumn{2}{c|}{$d_{\text{token}}=16$} & \multicolumn{2}{c|}{$d_{\text{token}}=32$} & \multicolumn{2}{c|}{$d_{\text{token}}=100$} & \multicolumn{2}{c}{$d_{\text{token}}=200$} \\
\cline{3-10}
$n$ & layers & max & avg & max & avg & max & avg & max & avg \\
\hline
2 & 1 & 100.00 & 82.36 & 100.00 & 100.00 & 100.00 & 69.92 & 50.29 & 49.84 \\
2 & 2 & 100.00 & 100.00 & 100.00 & 100.00 & 100.00 & 100.00 & 100.00 & 100.00 \\
5 & 1 & 99.74 & 72.71 & 99.80 & 82.82 & 99.54 & 70.56 & 23.51 & 21.36 \\
5 & 2 & 100.00 & 99.86 & 99.97 & 99.77 & 99.89 & 97.85 & 99.99 & 77.72 \\
10 & 1 & 71.28 & 58.15 & 99.75 & 95.28 & 88.35 & 25.74 & 11.30 & 10.75 \\
10 & 2 & 99.70 & 81.88 & 99.99 & 99.76 & 99.98 & 28.95 & 25.03 & 16.63 \\
20 & 1 & 26.54 & 18.43 & 69.80 & 38.79 & 5.06 & 4.95 & 5.21 & 5.03 \\
20 & 2 & 99.93 & 63.34 & 100.00 & 99.89 & 5.32 & 5.08 & 10.12 & 6.00 \\
\hline
\end{tabular}
\caption{Accuracy (max/avg) for case (5) (consecutive positions, permuted keys) across varying $n$, layers, and $d_{\text{token}}$.
Position embedding is learned.}
\label{tab:accuracy_case_5}
\end{table}

\pagebreak
\section{Acknowledgments}
The authors thank David Chiang,  Alexander Kozachinskiy, Binghui Peng, and Andy Yang for their generous help.

\bibliography{main}
\bibliographystyle{colm2025_conference}

\pagebreak
\appendix

\section{\texorpdfstring{Example of the Representation of Integers from $[n]$}{Example of the Representation of Integers from [n]}}
\label{circle_graphic}

\begin{figure}[ht]
  \centering
  \includegraphics[width=0.6\textwidth]{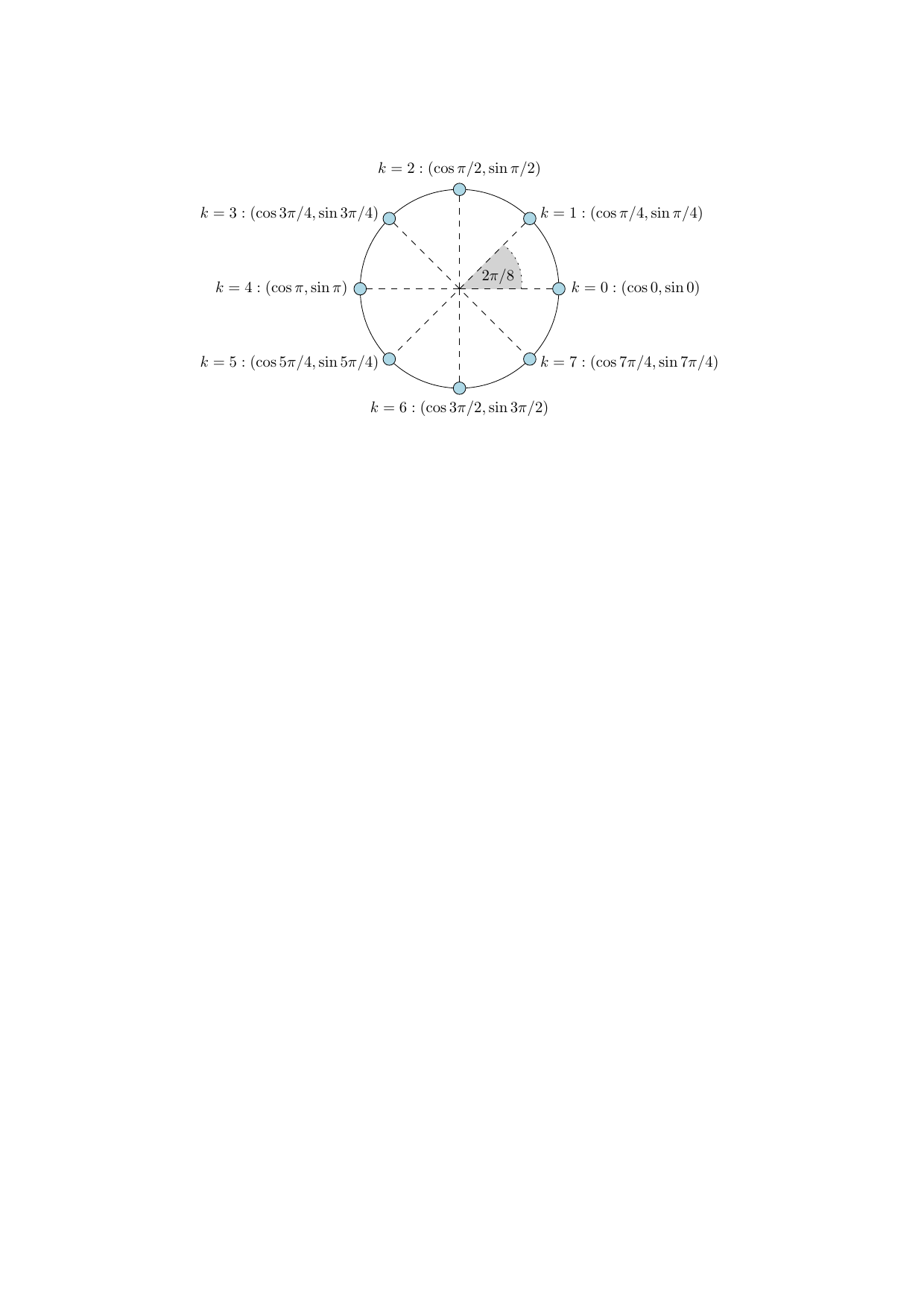}
  \caption{Each integer $k \in [8]$ is mapped to a point on the unit circle using the transformation $\cs_8(k) = [\cos(\frac{2\pi k}{8}), \sin(\frac{2\pi k}{8})]$.
  E.g., $k=0$ maps to $(1,0)$, $k=2$ maps to $(0,1)$, and so on.
  Dashed radial lines show the angles between consecutive points.}
  \label{fig:sincosconstruction}
\end{figure}

\section{\texorpdfstring{Example of the Construction in \cref{theorem:positive-case3}}{Example of the Construction in Theorem Positive Case 3}}
\label{appendix:example-of-theorem-positive-case3}

\begin{example}[Example for \cref{theorem:positive-case3}]
In \cref{theorem:positive-case3}, we prove that for any positive integer $n$, there exists a 1-layer transformer that evaluates a function $f\colon [n] \to [n]$ when the input is presented as pairs $(k_i, v_i)$ stored at the same position (with $k_i$ being a permuted key and $v_i = f(k_i) $), and the target key $i^*$ is provided as an additional input.
The transformer has leftmost hard attention, one head, an embedding dimension of 4, and $O(\log n)$ bits of precision.
Here we describe an example of the construction for the specific case $n=4$.

\textbf{Representation of Integers.}  
Each integer $ k\in [4] $ is encoded using the mapping
\[
    \cs_4(0) = \begin{bmatrix} 1 \\ 0 \end{bmatrix},\quad
    \cs_4(1) = \begin{bmatrix} 0 \\ 1 \end{bmatrix},\quad
    \cs_4(2) = \begin{bmatrix} -1 \\ 0 \end{bmatrix},\quad
    \cs_4(3) = \begin{bmatrix} 0 \\ -1 \end{bmatrix}.
\]
This encoding corresponds to mapping the integer $k$ to a point on the unit circle with angle $\theta_4(k)=\frac{2\pi k}{4}$.

\textbf{Input Construction.}  
Consider a function $ f\colon [4] \to [4] $ defined by the following key–value pairs:
\[
    (k_0,v_0) = (0,2),\quad
    (k_1,v_1) = (2,3),\quad
    (k_2,v_2) = (1,2),\quad
    (k_3,v_3) = (3,1).
\]
Let the target key be $ i^*=2 $, so the correct output is $ f(2)=3 $.
The input sequence is constructed as follows.
For $ j\in\{0,1,2,3\} $, define
\[
    \mat{x}_j = \begin{bmatrix} \cs_4(k_j) \\ \cs_4(v_j) \end{bmatrix}.
\]
Explicitly,
\[
    \mat{x}_0 = \begin{bmatrix} \cs_4(0) \\ \cs_4(2) \end{bmatrix} 
    = \begin{bmatrix} 1 \\ 0 \\ -1 \\ 0 \end{bmatrix},\quad
    \mat{x}_1 = \begin{bmatrix} \cs_4(2) \\ \cs_4(3) \end{bmatrix} 
    = \begin{bmatrix} -1 \\ 0 \\ 0 \\ -1 \end{bmatrix},\quad
    \mat{x}_2 = \begin{bmatrix} \cs_4(1) \\ \cs_4(2) \end{bmatrix} 
    = \begin{bmatrix} 0 \\ 1 \\ -1 \\ 0 \end{bmatrix},\quad
    \mat{x}_3 = \begin{bmatrix} \cs_4(3) \\ \cs_4(1) \end{bmatrix} 
    = \begin{bmatrix} 0 \\ -1 \\ 0 \\ 1 \end{bmatrix}.
\]
An additional input vector is appended to encode the target key:
\[
    \mathbf{x}_4 = \begin{bmatrix} \cs_4(i^*) \\ \mathbf{0}_{2\times1} \end{bmatrix}
    = \begin{bmatrix} \cs_4(2) \\ 0 \\ 0 \end{bmatrix}
    = \begin{bmatrix} -1 \\ 0 \\ 0 \\ 0 \end{bmatrix}.
\]

\textbf{Linear Transformations.}  
The transformer uses the following weight matrices:
\[
    \qproj = \begin{bmatrix} \mat{I}_2 & \mat{0} \end{bmatrix},\quad
    \kproj = \begin{bmatrix} \mat{I}_2 & \mat{0} \end{bmatrix},\quad
    \vproj = \begin{bmatrix} \mat{0}_{2\times 2} & \mat{I}_2 \\ \mat{0}_{2\times 2} & \mat{0}_{2 \times 2} \end{bmatrix}.
\]
Thus, the query at the target position (position 4) is computed as
\[
    \mat{q}_4 = \qproj \mat{x}_4 
    = \begin{bmatrix} -1 \\ 0 \end{bmatrix},
\]
and for positions $0$ through $3$, the key and value vectors are
\[
    \mat{k}_0 = \begin{bmatrix} 1 \\ 0 \end{bmatrix},\quad
    \mat{k}_1 = \begin{bmatrix} -1 \\ 0 \end{bmatrix},\quad
    \mat{k}_2 = \begin{bmatrix} 0 \\ 1 \end{bmatrix},\quad
    \mat{k}_3 = \begin{bmatrix} 0 \\ -1 \end{bmatrix},
\]
with corresponding value vectors
\[
    \mat{v}_0 = \begin{bmatrix} -1 \\ 0 \\ 0 \\ 0 \end{bmatrix},\quad
    \mat{v}_1 = \begin{bmatrix} 0 \\ -1 \\ 0 \\ 0 \end{bmatrix},\quad
    \mat{v}_2 = \begin{bmatrix} -1 \\ 0 \\ 0 \\ 0 \end{bmatrix},\quad
    \mat{v}_3 = \begin{bmatrix} 0 \\ 1 \\ 0 \\ 0 \end{bmatrix}.
\]
For completeness,
\[
    \mathbf{k}_4 = \mathbf{W}^\text{K}\mathbf{x}_4 = \begin{bmatrix} 0 \\ 0 \end{bmatrix}.
\]

\textbf{Attention Score Computation.}  
The attention score between the query at position 4 and a key at position $j$ is given by
\[
    s_j = \frac{\cos\Bigl(\theta_4(i^*) - \theta_4(k_j)\Bigr)}{\sqrt{2}},
\]
which computes to:
\begin{align*}
    s_0 &= \frac{\cos\Bigl(\theta_4(2)-\theta_4(0)\Bigr)}{\sqrt{2}}
         = \frac{\cos\bigl(\pi - 0\bigr)}{\sqrt{2}} = 0,\\[1mm]
    s_1 &= \frac{\cos\Bigl(\theta_4(2)-\theta_4(2)\Bigr)}{\sqrt{2}}
         = \frac{\cos(0)}{\sqrt{2}} = \frac{1}{\sqrt{2}},\\[1mm]
    s_2 &= \frac{\cos\Bigl(\theta_4(2)-\theta_4(1)\Bigr)}{\sqrt{2}}
         = \frac{\cos\bigl(\pi - \pi/2\bigr)}{\sqrt{2}} = 0,\\[1mm]
    s_3 &= \frac{\cos\Bigl(\theta_4(2)-\theta_4(3)\Bigr)}{\sqrt{2}}
         = \frac{\cos\bigl(\pi - 3\pi/2\bigr)}{\sqrt{2}} = -\frac{1}{\sqrt{2}},\\[1mm]
    s_4 &= 0.
\end{align*}

\textbf{Hard Attention and Output.}  
Using leftmost hard attention, the transformer assigns weight 1 to the leftmost maximum score.
Since $s_1=\frac{1}{\sqrt{2}}$ is the maximum (and uniquely so), we have
\[
    \alpha_1=1 \quad \text{and} \quad \alpha_j=0 \text{ for } j\neq 1.
\]
Thus, the output of the attention layer is
\[
    \mat{h} = \mat{v}_1 = \begin{bmatrix} 0 \\ -1 \\ 0 \\ 0 \end{bmatrix}.
\]

\textbf{Unembedding.}  
Finally, an unembedding matrix $\mat{U}$ is used to convert $\mat{h}$ into the final output.
We have the following.
\[
    \hat{\mat{v}} = \mat{U} \mat{h} =
    \begin{bmatrix}
    \cos 0 & \sin 0 & 0 & 0\\[1mm]
    \cos \frac{\pi}{2} & \sin \frac{\pi}{2} & 0 & 0\\[1mm]
    \cos \pi & \sin \pi & 0 & 0\\[1mm]
    \cos \frac{3\pi}{2} & \sin \frac{3\pi}{2} & 0 & 0
    \end{bmatrix}
    \begin{bmatrix} 0 \\ -1 \\ 0 \\ 0 \end{bmatrix}
    = \begin{bmatrix} 0 \\ -1 \\ 0 \\ 1 \end{bmatrix}.
\]
Because the maximum entry in $\hat{\mat{v}}$ is at index 3, the transformer outputs $3$, which matches the correct function evaluation output $f(2)=3$.
\end{example}

\section{\texorpdfstring{Proof of \cref{thm:in_order_no_keys}}{Proof of Theorem In Order No Keys}}
\label{appendix:proof_of_in_order_no_keys}

\begin{proof}
In this case, the sequence of inputs is just $f(0),f(1),\ldots,f(n-1),i^*$.
That is, $f(i)$ is in position $i$ and the target key $i^*$ is in position $n$.
The idea is that the position embedding for $i$ specifies the key $i$ similarly to the construction for \cref{theorem:positive-case3}, although here we use $n+1$ instead of $n$ angles to represent integers.
The token embedding for $v \in [n+1]$ is $\mat{w}(v) = [\cos(\theta_{n+1}(v)),\sin(\theta_{n+1}(v)),0,0]^\top$ and the position embedding for position $i \in [n+1]$ is $\mat{p}(i) = [0,0,\cos(\theta_{n+1}(i)),\sin(\theta_{n+1}(i))]^\top$.

We define the matrices $\qproj$, $\kproj$, and $\vproj$ as:
\begin{align*}
    \qproj &= \begin{bmatrix}
    \mat{I}_2 & \mat{0}_{2 \times 2}
    \end{bmatrix}, \\
    \kproj &= \begin{bmatrix}
    \mat{0}_{2 \times 2} & \mat{I}_2
    \end{bmatrix}, \\
    \vproj &= \begin{bmatrix}
     \mat{I}_2  & \mat{0}_{2 \times 2}\\
     \mat{0}_{2 \times 2} & \mat{0}_{2 \times 2}
    \end{bmatrix}.
\end{align*}
Then for position $n$ the query is $[\cos(\theta_{n+1}(i^*)),\sin(\theta_{n+1}(i^*))]^\top$ and for position $j \in [n+1]$ the key is $[\cos(\theta_{n+1}(j)),\sin(\theta_{n+1}(j))]^\top$.
The attention score for position $n$ is uniquely maximized at $j = i^*$.
At that position, the value $\mat{V}(j) = [\cos(\theta_{n+1}(f(j))),\sin(\theta_{n+1}(f(j))),0,0]^\top$.
The unembedding matrix $\mat{U}$ consists of the $n+1$ rows $[\cos(\theta_{n+1}(j)),\sin(\theta_{n+1}(j)),0,0]$ for all $j \in [n+1]$, which ensures that the output will be $f(i^*)$.
\end{proof}

\section{\texorpdfstring{Proof of \cref{thm:consecutive_ordered_case}}{Proof of Theorem Consecutive Ordered Case}}
\label{appendix:proof_consecutive_ordered}

\begin{proof}
In this case the input sequence is $0,f(0),1,f(1),\ldots,n-1,f(n-1),i^*$.
That is, the $n$ distinct keys $k \in [n]$ are stored in increasing order at positions $2k$, and their corresponding values $f(k)$ at positions $2k+1$.
The target key $i^*$ is at position $2n$.

The token embedding for a value $x \in [n]$ is defined by
$\mat{w}(x) = [\cos(\theta_n(x)),\sin(\theta_n(x)),0,0]^{\top}$.
The position embedding for a position $i \in [2n+1]$ is defined as follows.
If $i$ is even, say $i = 2k$, then
$\mat{p}(i) = [0,0,0,0]^{\top}$.
If $i$ is odd, say $i = 2k+1$, then
$\mat{p}(i) = [0,0,\cos(\theta_n(k)),\sin(\theta_n(k))]^{\top}$.

For each position $i$, let $x_i$ be the integer (key or value) at position $i$.
The input embedding maps $x_i$ to the sum of $\mat{w}(x_i)$ and $\mat{p}(i)$, which is a vector of $4$ dimensions in which the first two dimensions give the integer (key or value) at that position, and the second two dimensions give either $0$ or the key for the value at this position.
Thus the embedding dimension is $4$.

We define the matrices $\qproj$, $\kproj$, and $\vproj$ as follows.
\begin{align*}
  \qproj
  &=
  \begin{bmatrix}
  \mat{I}_2 & \mat{0}_{2\times 2}
  \end{bmatrix},
  \\
  \kproj 
  &=
  \begin{bmatrix}
  \mat{0}_{2\times 2} & \mat{I}_2
  \end{bmatrix},
  \\
  \vproj 
  &= 
  \begin{bmatrix}
     \mat{I}_2  & \mat{0}_{2 \times 2}\\
     \mat{0}_{2 \times 2} & \mat{0}_{2 \times 2}
  \end{bmatrix}.
\end{align*}
$\qproj$ extracts the representation of the key or value from an input vector, and $\kproj$ extracts the representation of position.

The target key $i^*$ is at position $2n$, and therefore:
  \[
    \mathbf{q}_{2n} 
    =\qproj \mat{x}_{2n}
    = \cs_{n}(i^*)
    = [\cos(\theta_n(i^*)),\sin(\theta_n(i^*))]^\top.
  \]

For an even position $j = 2k$,
  \[
    \mat{k}_j 
    = \kproj \mat{x}_j 
    = \begin{bmatrix}
        0 \\
        0
    \end{bmatrix}.
  \]
For an odd position $j = 2k+1$,
 \[
    \mat{k}_j 
    = \kproj \mat{x}_j
    = \cs_{n}(k)
    = [\cos(\theta_n(k)),\sin(\theta_n(k))]^\top.
  \]

The attention scores at position $2n$ are computed as follows.
For even $j$,
\[
   s_j
   =
   \frac{\mat{q}_{2n} \cdot \mat{k}_j}{\sqrt{2}}
   = 0.
\]
For odd $j = 2k+1$,
\[
   s_j
   =
   \frac{\mat{q}_{2n} \cdot \mat{k}_j}{\sqrt{2}}
   = \frac{\cs_n(i^*) \cdot \cs_n(k)}{\sqrt{2}}.
\]
Leftmost hard attention chooses the leftmost position $j^*$ with the maximum score $s_{j^*}$, which in this case is unique and has $j^* = 2i^*+1$.
This position stores the representation of $f(i^*)$ in coordinates $0$ and $1$, so the output of the layer at position $2n$ is $[\cos(f(i^*)),\sin(f(i^*)),0,0]$.
The unembedding matrix $\mat{U}$ is as in \cref{theorem:positive-case3}, yielding the output $f(i^*)$.
\end{proof}

\section{\texorpdfstring{Proof of \cref{thm:case-5-hard-attention-impossibility}}{Proof of Theorem Case-5 Hard Attention Impossibility}}
\label{appendix:proof_case_5_hard_attention_impossibility}

\begin{proof}
    Assume $n \ge 4$.
    Assume $T$ is a 1-layer unique or average hard attention transformer that performs function evaluation on domain $[n]$ when keys and values are consecutive and keys are permuted.
    Let $\mathbf{w}$, $\mathbf{p}$, and $\mathbf{e}$ be the token, position, and input embeddings of $T$.
    For $x, y \in [4]$ define the sequences $\ell(x,y) = \langle 0,x,1,y \rangle$, $r(x,y) = \langle 1,x,0,y \rangle$ and $t = \langle 2,2,\ldots,(n-1),(n-1),0 \rangle$.
    Let $\circ$ denote concatenation of sequences.
    For the input sequence $\ell(x,y) \circ t$ the output of $T$ must be $x$, and for the input sequence $r(x,y) \circ t$, the output of $T$ must be $y$.
    For position $j \in [2n+1]$ and value $v \in [n]$ , let $s(j,v)$ be the attention score for position $2n$ containing value $0$ and position $j$ containing value $v$.
    
    Let $m_1$ be the maximum of the values $s(1,x)$ for $x \in [4]$, and let $m_2$ be the maximum of the values $s(3,y)$ for $y \in [4]$.
    Setting $y = 0$, the four input sequences $\ell(x,0) \circ t$ for $x \in [4]$ require four different outputs, but only differ in position $1$.
    Thus in at least three of these cases, position $2n$ must attend to position $1$, which means that for at least three elements $x \in [4]$, the value of $s(1,x)$ is maximum for the input $\ell(x,0) \circ t$ and must be at least as large as $s(3,0)$.
    Similarly, for each $y \in [4]$ there are at least three values $x \in [4]$ such that $s(1,x) \ge s(3,y)$.
    Thus there are at least $3$ values $x \in [4]$ such that $s(1,x) \ge m_2$.
    Considering $x = 0$, the four sequences $r(0,y) \circ t$ for $y \in [4]$ require four different outputs, but only differ in position $3$.
    A similar argument shows that for each element $x \in [4]$, there are at least three elements $y \in [4]$ such that $s(3,y) \ge s(1,x)$.
    Thus there are at least $3$ values $y \in [4]$ such that $s(3,y) \ge m_1$.
    Hence $m_1 = m_2$.
    If $S = \{x \in [4] \mid s(1,x) = m_1 = m_2\}$ and $T = \{y \in [4] \mid s(3,y) = m_1 = m_2\}$, then $|S| \ge 3$ and $|T| \ge 3$.
    
    If the attention layer of $T$ is leftmost hard attention, this is a contradiction because for $x \in S$ and every value of $y \in [4]$, the attention value of $s(1,x)$ dominates the attention value of $s(3,y)$ in the input sequence $r(x,y) \circ t$, and $T$ does not attend to position $3$.
    An analogous argument holds for rightmost hard attention.
    
    For the case of average hard attention, consider the input sequences $\ell(x,y) \circ t$ for $x \in S$ and $y \in T$.
    Positions $1$ and $3$ have the maximum score $m_1 = m_2$ in each of these sequences, and each receives attention $1/M$, where $M$ is the number of positions with the maximum score (which is the same for all the sequences).
    The sets $S, T \subseteq [4]$ have at least $2$ elements in common, say $v_1 \neq v_2$.
    We claim that the input sequences $\ell(v_1,v_2) \circ t$ and $\ell(v_2,v_1) \circ t$ have the same attention output at position $2n$, although they require two different outputs, a contradiction.
    This is because in averaging the $M$ positions with the maximum attention score, the two token embeddings $\mathbf{w}(v_1)$ and $\mathbf{w}(v_2)$ are simply swapped, which does not affect the attention output.
\end{proof}

\section{\texorpdfstring{Details for \cref{thm:k-size_lower_bound}}{Details for Theorem k-size lower bound}}
\label{appendix:details_k-size_lower_bound}

We define VC-dimension and split-VC dimension in order to state the lower bound of \citet{kozachinskiy2025strassenattentionunlockingcompositional}.

\paragraph{VC-Dimension.}
Let $X$ be a set and $C$ a class of concepts $c:X \rightarrow \{0,1\}$.
Let $S$ be a finite subset of $X$, say $S = \{s_0,\ldots,s_{m-1}\}$.
The set $S$ is \emph{shattered by} $C$ iff for every sequence of values $(b_0,\ldots,b_{m-1}) \in \{0,1\}^m$, there exists a concept $c \in C$ such that $c(s_i) = b_i$ for all $i \in [m]$.
The \emph{Vapnik-Chervonenkis (VC) dimension} of $C$ is $d$ iff $d$ is the largest cardinality of any subset of $X$ shattered by $C$.

\paragraph{Split-VC Dimension.}
A transformer $T$ that outputs $0$ or $1$ can be thought of as defining a function $t$ from sequences of vectors $\mathbf{x}_0,\mathbf{x}_1,\ldots,\mathbf{x}_{n-1}$ to $\{0,1\}$.
We can turn the function $t$ into a set of concepts by partitioning the set of indices $[n]$ into two non-empty sets $A = \{i_0,\ldots,i_{k-1}\}$ and $B =\{j_0,\ldots,j_{\ell-1}\}$, and treating the positions in $A$ as specifying an input, and the positions in $B$ as specifying a concept.
That is, a sequence of $\ell$ vectors $v_0,\ldots,v_{\ell-1}$ specifies that concept $c$ that maps sequences of $k$ vectors $u_0,\ldots,u_{k-1}$ to $\{0,1\}$ by assigning the vectors $u_i$ (in order) to the positions in $A$, and assigning the vectors $v_j$ (in order) to the positions in $B$, to give a full input sequence $x_0,\ldots,x_{n-1}$, for which the transformer output is $0$ or $1$.
This class of concepts may have a VC-dimension, which depends on the partition $A$ and $B$ of the positions in the input.
If $d$ is the maximum VC-dimension of any class of concepts formed in this way over all choices of the partition $A$ and $B$, then $d$ is the \emph{split-VC} dimension of the function $t$.
The split-VC dimension of the function computed by a one layer transformer gives a lower bound on the k-size of the transformer.
Theorem 3.2 of \cite{kozachinskiy2025strassenattentionunlockingcompositional} may be stated as follows.

\begin{theorem} 
\label{theorem:strassen-attention}
Let $T$ be a 1-layer transformer 
and let $f : \Sigma^n \rightarrow \{0,1\}$ with split-VC dimension $d$.
If $T$ computes $f$, then the k-size of $T$ is $d^{\Omega(1)}$.
\end{theorem}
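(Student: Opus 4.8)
The plan is to follow the VC-dimension lower-bound technique of \citet{kozachinskiy2025strassenattentionunlockingcompositional}, of which \cref{theorem:strassen-attention} is a restatement of their Theorem~3.2. The whole statement reduces to one quantitative claim: \emph{if $T$ is a $1$-layer transformer, then the split-VC dimension of the function it computes is at most $p(\text{k-size}(T))$ for a fixed polynomial $p$}; solving this inequality for $\text{k-size}(T)$ immediately gives $\text{k-size}(T) \ge d^{\Omega(1)}$, with the $\Omega(1)$ a universal constant coming from $1/\deg p$. So fix an optimal partition $[n] = A \sqcup B$ witnessing the split-VC dimension $d$, and a family $u^{(1)},\dots,u^{(d)}$ of assignments to the $A$-positions that is shattered by the induced concept class $\{c_v\}$, where $v$ ranges over assignments to the $B$-positions; thus every labeling in $\{0,1\}^d$ is realized by some $c_v$, and the goal is to bound $d$ by a fixed polynomial in $h$, the embedding dimension $e$, and the number $P$ of MLP parameters.

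The first step is the structural observation that makes a $1$-layer transformer tractable: at the distinguished output position $o$, the only non-local operation is the single attention layer, and once one fixes, for each of the $h$ heads, the position it attends to (a single position under leftmost or rightmost hard attention, more generally the maximal-score set), the transformer output becomes the value of a fixed arithmetic-comparison circuit of size $O(h e^2 + P)$ --- hence of size polynomial in $\text{k-size}(T)$ --- applied \emph{only} to the input vectors at $o$ and at the $O(h)$ attended positions, since the attention output is then the fixed linear image under $\vproj$ of those few vectors and everything downstream is the MLP plus the linear readout and threshold. The second step is that which position each head attends to is decided by the arrangement of the hypersurfaces $\{\mat{s}[o,j] = \mat{s}[o,j']\}$ of pairwise attention-score comparisons, each a low-degree polynomial in the query and key entries. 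Combining the two: over all $B$-assignments $v$, the restriction of the trace $\{(c_v(u^{(1)}),\dots,c_v(u^{(d)}))\}$ to each cell of the score arrangement is the trace of a single polynomial-threshold family of complexity $\mathrm{poly}(\text{k-size}(T))$ on $O(he)$ effective real parameters, whose VC dimension is $\mathrm{poly}(\text{k-size}(T))$ by the bounds of Warren and of Goldberg--Jerrum (for softmax or average attention one instead invokes Karpinski--Macintyre for Pfaffian/exponential circuits, paying a larger but still fixed-degree polynomial). A Sauer--Shelah union bound over the cells then yields an inequality of the form $2^d \le \mathrm{poly}(\text{k-size}(T)) \cdot d^{\,\mathrm{poly}(\text{k-size}(T))}$, which forces $d \le p(\text{k-size}(T))$ as required.

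The main obstacle --- and where the argument of \citet{kozachinskiy2025strassenattentionunlockingcompositional} does the real work --- is keeping every bound above independent of the sequence length $n$. The attention layer syntactically reads all $n$ positions, so applying Goldberg--Jerrum naively with the $B$-assignment as the parameter vector (of dimension $|B|\,e = \Theta(n e)$) and a circuit of size $\Theta(n)$ only gives a split-VC bound polynomial in $n$, which is useless. The fix is exactly the locality observation above --- under hard attention each head exposes only one position's value, so a single coordinate $c_v(u^{(i)})$ depends on only $O(he)$ effective parameters rather than $|B|\,e$ --- combined with a careful count of how many genuinely distinct attention configurations can arise across the $d$ fixed inputs (at most $d$ distinct attended $B$-positions ever occur), so that both the number of cells and the per-cell trace are controlled by $d$ and $\text{k-size}(T)$ alone. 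The soft and average attention cases need an extra approximation/discretization step, exploiting the transformer's finite precision, to recover the same locality. I expect this $n$-elimination to be the crux; the remainder is bookkeeping with standard VC-dimension bounds for semialgebraic and Pfaffian-definable families.
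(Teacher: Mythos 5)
First, a point of comparison: the paper does not prove \cref{theorem:strassen-attention} at all --- it is quoted verbatim as Theorem~3.2 of \citet{kozachinskiy2025strassenattentionunlockingcompositional} and used as a black box in \cref{appendix:details_k-size_lower_bound} (the paper's own contribution there is only the split-VC \emph{lower} bound of $n$ for the modified task). So there is no in-paper proof to match your sketch against; your proposal must stand on its own, and as written it has genuine gaps rather than being a complete alternative argument.

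The central gap is that your structural reduction is developed for hard attention (``once one fixes, for each head, the position it attends to \dots the output depends only on the vectors at $o$ and the $O(h)$ attended positions''), while the theorem must hold for standard softmax attention --- indeed that is exactly where the paper applies it, to rule out small-$k$-size softmax constructions such as the one in \cref{thm:one-layer-consecutive-permuted-large}, whose attention output genuinely mixes all $n$ positions and admits no attended-position case split. Your proposed repair for the softmax and average cases, ``an extra approximation/discretization step, exploiting the transformer's finite precision,'' is worse than a technical omission: the whole point of the k-size bound, stressed explicitly in \cref{subsubsec:futher_lower_bounds}, is that it is independent of precision and holds under exact real arithmetic; a precision-dependent argument collapses back into the communication-complexity regime of \cref{thm:sanford-lower-bound} and does not prove the stated theorem. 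Finally, even granting hard attention, the counting step is not actually closed: the concepts are indexed by $B$-assignments $v$ with $\Theta(n)$ degrees of freedom, and Warren/Goldberg--Jerrum style bounds require exhibiting each $c_v$ inside a definable family with only $\mathrm{poly}(\text{k-size})$ real parameters, uniformly over \emph{all} inputs $u$ (not just over one fixed shattered set with attended positions frozen); the inequality $2^d \le \mathrm{poly}(\text{k-size})\cdot d^{\,\mathrm{poly}(\text{k-size})}$ does not follow from the steps you describe without that compression. You correctly identify this $n$-elimination as the crux, but you then defer it to the cited paper, so the proposal ultimately reduces to ``invoke \citet{kozachinskiy2025strassenattentionunlockingcompositional}'' --- which is what the paper itself does --- without supplying the missing argument.
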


We apply their technique to the task of function evaluation with keys and values in consecutive positions and permuted keys.
We repeat the statement of \cref{thm:k-size_lower_bound}.

\begin{theorem}
    There exist real numbers $c, \epsilon > 0$ such that for all integers $n \ge 2$, if $T$ is a one layer transformer that correctly performs function evaluation for domain $[n]$ when the keys and values are in consecutive positions and the keys are permuted (case 5), then the k-size of $T$ is at least $cn^{\epsilon}$.
\end{theorem}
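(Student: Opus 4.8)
The plan is to reduce the lower bound to \cref{theorem:strassen-attention} by exhibiting, for every $n$, a way to extract from a transformer that solves case (5) a family of Boolean functions with large split-VC dimension. First I would reduce to a Boolean version of the task: instead of outputting $f(i^*) \in [n]$, have the transformer output one designated bit of $f(i^*)$ (say its low-order bit), under an input embedding in which values and target key are encoded in the same manner as in the full task but only a single output bit is read off by the unembedding. Any transformer solving the $[n]$-valued task can be converted, with only a constant-factor change in $h$, $d$, and the MLP parameter count, into one solving this Boolean task, so a k-size lower bound for the Boolean task transfers (up to constants) to the original. Then I would instantiate the split: partition the $2n+1$ input positions into the set $B$ consisting of the value positions (the odd positions $2i+1$ for $i \in [n]$), which will encode a \emph{concept} — namely an arbitrary function $g : [n] \to \{0,1\}$ — and the set $A$ consisting of the key positions together with the target-key position, which will encode an \emph{input} — namely a key $k \in [n]$ placed at the appropriate positions with $i^* = k$. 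On such an input the transformer outputs the designated bit of the value paired with key $k$, which is exactly $g(k)$.

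The next step is to show this concept class shatters a set of size $n$ (or $n^{\Omega(1)}$, which is all we need). Fix the key positions to carry the identity permutation $\pi = \mathrm{id}$, so that key $i$ sits in position $2i$ and the value paired with it sits in position $2i+1$. An input in $A$ is then just a choice of target key $k \in [n]$; the $n$ possible inputs are $s_0, \ldots, s_{n-1}$. A concept in $B$ is a choice of bit string $(b_0, \ldots, b_{n-1}) \in \{0,1\}^n$, realized by placing in position $2i+1$ any value in $[n]$ whose designated bit equals $b_i$ (this is possible for every target bit pattern as long as $n \ge 2$, since both bit values are attained by some element of $[n]$). For this concept and input $s_k$, the transformer outputs the designated bit of the value in position $2k+1$, which is $b_k$. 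Hence every one of the $2^n$ bit patterns is realized, so $\{s_0, \ldots, s_{n-1}\}$ is shattered and the split-VC dimension of the (Boolean) function computed by $T$ is at least $n$. Applying \cref{theorem:strassen-attention} with $d \ge n$ gives k-size $\ge n^{\Omega(1)}$, i.e.\ there are constants $c, \epsilon > 0$ with k-size at least $c n^\epsilon$, which is the claim.

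The main obstacle I anticipate is bookkeeping around the reduction to a Boolean output rather than anything conceptually deep: one must check that restricting the $[n]$-valued transformer to read a single output bit does not blow up the k-size by more than a constant (composing the original unembedding with a fixed linear map that picks out one coordinate, or appending one tiny MLP layer, suffices — and since k-size is a max of $h$, $d$, and MLP-parameter count rather than a product, adding $O(1)$ parameters is harmless). A second minor point is permutation-consistency: the construction must respect that in case (5) the keys in $A$ are an arbitrary permutation of $[n]$; fixing that permutation to the identity is a legitimate special case, and the transformer, being required to work on \emph{all} inputs, must in particular work on these, so the shattering argument goes through. Everything else — that the position embeddings, attention, and MLP of $T$ are simply inherited unchanged, and that the split $A \mid B$ is a valid partition of $[2n+1]$ into two nonempty sets — is routine, and I would relegate it to a sentence or two.
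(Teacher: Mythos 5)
Your proposal is correct and follows essentially the same route as the paper's proof: reduce to a Boolean decision about the transformer's output, partition the positions into key/target positions (inputs) and value positions (concepts), shatter a set of $n$ inputs, and invoke \cref{theorem:strassen-attention}. The only cosmetic differences are that the paper's Boolean task is ``is $f(0)=0$?'' and it shatters $n$ key permutations (transpositions of $0$ and $k$) with the target key fixed to $0$, whereas you read off the low-order bit of $f(i^*)$ and shatter $n$ choices of target key under the identity permutation; both give split-VC dimension at least $n$, and both treat the Boolean conversion of $T$ at the same level of informality as the paper does.
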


\begin{proof}
    To apply \cref{theorem:strassen-attention}, we consider the following modified task: to determine whether $f(0) = 0$, with an output of $0$ for ``no'' or $1$ for ``yes''.
    If $T$ is a transformer solving the function evaluation task, there is a transformer $T'$ solving the modified task that is not much larger (in k-size) than $T$; it just has to determine whether the output of $T$ is $0$ or not.

    Let $n \ge 2$.
    We show a lower bound of $n$ on the split-VC dimension of the modified task, which implies the result.
    We partition the input positions $p$ of $T'$ into even (the set $A$) and odd (the set $B$).
    Thus, $A$ contains the indices of the keys (including the target key) and $B$ contains the indices of the values.

    Define the vector $\mathbf{id} = [0,1,\ldots,n-1]$.
    For each $k \in [n]$, define the vector $\mathbf{u}_k$ to be $\mathbf{id}$ with the entries at positions $0$ and $k$ exchanged.
    Thus, $S = \{\mathbf{u}_k \mid k \in [n]\}$ is a set of $n$ permutations of $[n]$.
    Given a sequence of values $b_k \in \{0,1\}$ for $k \in [n]$, we would like to find a vector $\mathbf{v}$ of inputs for positions in $B$ such that assigning $\mathbf{u}_k$ to the positions in $A$ (and a $0$ for the last position in $A$, the target key), and assigning $\mathbf{v}$ to the positions in $B$ results in an input vector that $T'$ maps to $b_k$.
    For this it suffices to take $\mathbf{v}[k] = 1 - b_k$ for all $k \in [n]$.
    The vector $\mathbf{u}_k$ has a $0$ in position $k$ (so $\pi(k) = 0$) and the vector $\mathbf{v}$ has $1 - b_k$ in position $k$ (so $f(\pi(k)) = 1 - b_k$).
    Thus $f(0) = 0$ if $b_k = 1$ and $f(0) = 1$ if $b_k = 0$.
    Thus $S$ is shattered by the concept space represented by possible assignments to $B$, and the VC dimension of this concept space is at least $n$, and the split-VC dimension of the function for the modified task is at least $n$.
\end{proof}

\section{\texorpdfstring{Proof of \cref{thm:two-layer-consecutive-permuted}}{Proof of Theorem Two-Layer Consecutive Permuted}}
\label{appendix:proof_two_layer_consecutive_permuted}

\begin{proof}
The sequence of inputs is $\pi(0),f(\pi(0)),\pi(1),f(\pi(1)),\ldots,\pi(n-1),f(\pi(n-1)),i^*$.
We use two layers with leftmost unique hard attention.
We assume the first layer has residuals, allowing the second layer to have access to values from the input.
The first layer copies $\pi(k)$ from position $2k$ into position $2k+1$, so that position $2k+1$ contains both $\pi(k)$ and $f(\pi(k))$.
The second layer applies a version of the ``same-position'' 1-layer construction from \cref{theorem:positive-case3} that ignores the even positions and extracts $f(i^*)$.

The token and position embeddings are defined as follows for $k \in [n+1]$:
\begin{align*}
    \mathbf{w}(k) &= [\cos(\theta_{n+1}(k)),\sin(\theta_{n+1}(k)),0,0,0,0,0]^\top \\
    \mathbf{p}(2k) &= [0,0,0,0,\cos(\theta_{n+1}(k)),\sin(\theta_{n+1}(k)),-1]^\top \\
    \mathbf{p}(2k+1) &= [0,0,0,0,\cos(\theta_{n+1}(k)),\sin(\theta_{n+1}(k)),0]^\top.
\end{align*}

\paragraph{Layer 1.}
Matrices $\qproj_1, \kproj_1 \in \R^{3 \times 7}$ both map $\mat{x}_i$ to the vector $[\cos(\theta_{n+1}(k)),\sin(\theta_{n+1}(k)),0]^\top$, where $i = 2k$ or $i = 2k+1$, by extracting coordinates $4$ and $5$ of $\mat{x}_i$.
The attention score for position $p = 2k$ or $p=2k+1$ is thus maximized at positions $2k$ and $2k+1$.
Because attention is leftmost hard,  position $2k$ and position $2k+1$ both uniquely attend to position $j = 2k$.
The matrix $\vproj_1 \in \R^{7 \times 7}$ copies the values $[\cos(\theta_{n+1}(x_j)),\sin(\theta_{n+1}(x_j))]$ from dimensions $0$ and $1$ of the attended-to position $j$ into the dimensions $2$ and $3$ of the result, and sets the other dimensions to $0$.
Then the residual (input) vector is added, so that after the first attention layer, the vector $\mat{x}_i'$ at position $i$ is as follows.
If $k \in [n]$ and $i = 2k$ then
\[
\mat{x}'_i = [\cs_{n+1}(\pi(k)),\cs_{n+1}(\pi(k)),\cs_{n+1}(k), -1]^\top.
\]
If $k \in [n]$ and $i = 2k+1$ then
\[
\mat{x}'_i = [\cs_{n+1}(f(\pi(k))),\cs_{n+1}(\pi(k)),\cs_{n+1}(k), 0]^\top.
\]
For $i = 2n$,
\[
\mat{x}'_i = [\cs_{n+1}(i^*),\cs_{n+1}(i^*),\cs_{n+1}(n), -1]^\top.
\]

\paragraph{Layer 2.}
We use the approach from \cref{theorem:positive-case3} for keys and values in the same position with keys permuted.
The matrix $\kproj_2 \in \R^{3 \times 7}$ copies dimensions $2$, $3$, and $6$ from $\mat{x}'_i$.
Thus, for $k \in [n]$ and position $j = 2k$, the key is
\[
\mat{k}_{j} =
\begin{bmatrix}
    \cs_{n+1}(\pi(k)) \\
    -1
\end{bmatrix},
\]
for $k \in [n]$ and position $j = 2k+1$, the key is
\[
\mat{k}_{j} =
\begin{bmatrix}
    \cs_{n+1}(\pi(k)) \\
    0
\end{bmatrix},
\]
and for $j = 2n$, the key is
\[
\mathbf{k}_{2n} =
\begin{bmatrix}
    \cs_{n+1}(i^*) \\
    -1
\end{bmatrix}.
\]
The matrix $\qproj_2 \in \R^{3 \times 7}$ copies dimensions $0$, $1$ of $\mathbf{x}_i'$ and multiplies dimension $6$ by $-1$.
Thus, for position $2n$, the query is
\[
\mat{q}_{2n} =
\begin{bmatrix}
    \cs_{n+1}(i^*) \\
    1
\end{bmatrix}.
\]
The attention score for position $2n$ is uniquely maximized at position $j^* = 2k+1$ such that $\pi(k) = i^*$.
The matrix $\vproj_2$ copies dimensions $0$ and $1$ of the attended-to position $j^*$ and sets the other dimensions to $0$, yielding the desired value $[\cs_{n+1}(f(i^*)),0,0,0,0,0]^\top$.
The unembedding matrix $\mat{U}$ consists of $n+1$ rows of the form $[\cs_{n+1}(k),0,0,0,0,0]$ for $k \in [n+1]$, which yields the output $f(i^*)$.
\end{proof}

\section{Bimodal Behavior}
\label{appendix:bimodality}

We observe bimodal distributions in accuracy across different runs with the same hyperparameters for cases (1-4).
For example, \cref{fig:case2-ordered} illustrates this for case (2): for intermediate dimensions (e.g., $d_{\text{token}}=4$ or 8), some runs achieve perfect accuracy, while others often stagnate at chance-level accuracy ($\approx$1\%).
This behavior merits further investigation.

\begin{figure}[ht!]
    \centering
    \begin{subfigure}[b]{0.48\textwidth}
        \centering
        \includegraphics[width=\textwidth]{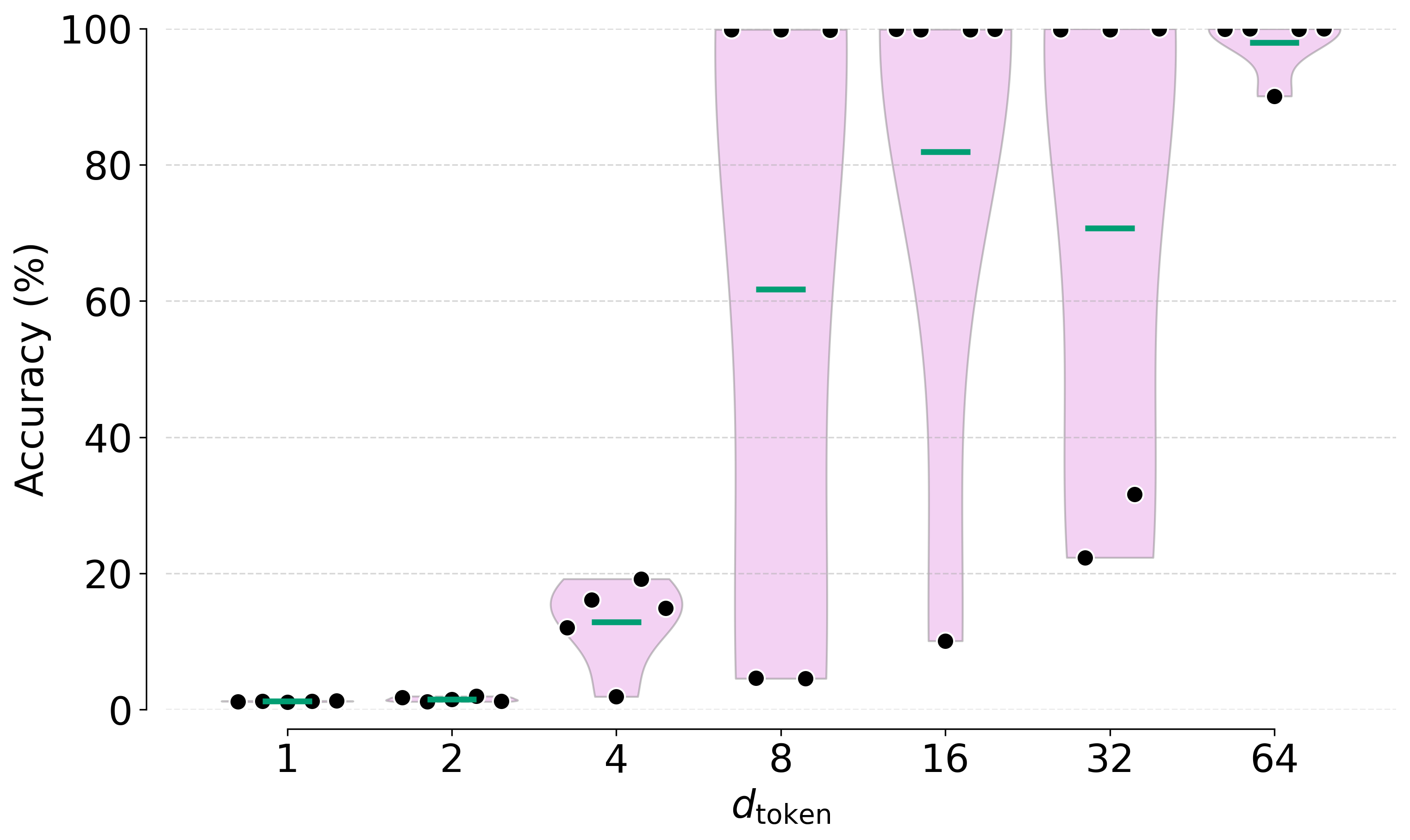}
        \caption{Case 1: Learned PE, No Keys}
        \label{fig:case1-learned}
    \end{subfigure}
    \hfill
    \begin{subfigure}[b]{0.48\textwidth}
        \centering
        \includegraphics[width=\textwidth]{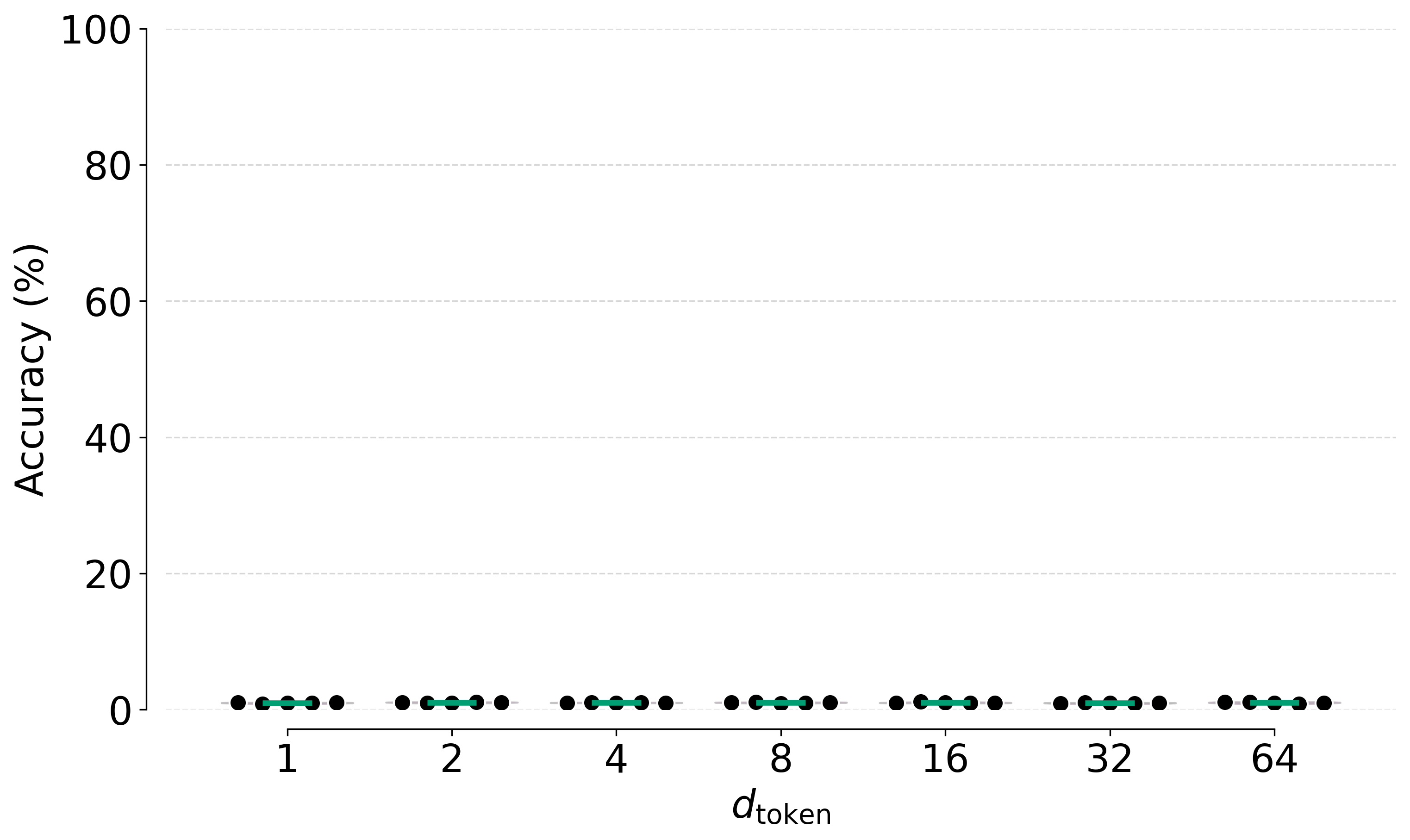}
        \caption{Case 1: No PE, No Keys}
        \label{fig:case1-none}
    \end{subfigure}
    
    \vspace{0.3cm}

    \begin{subfigure}[b]{0.48\textwidth}
        \centering
        \includegraphics[width=\textwidth]{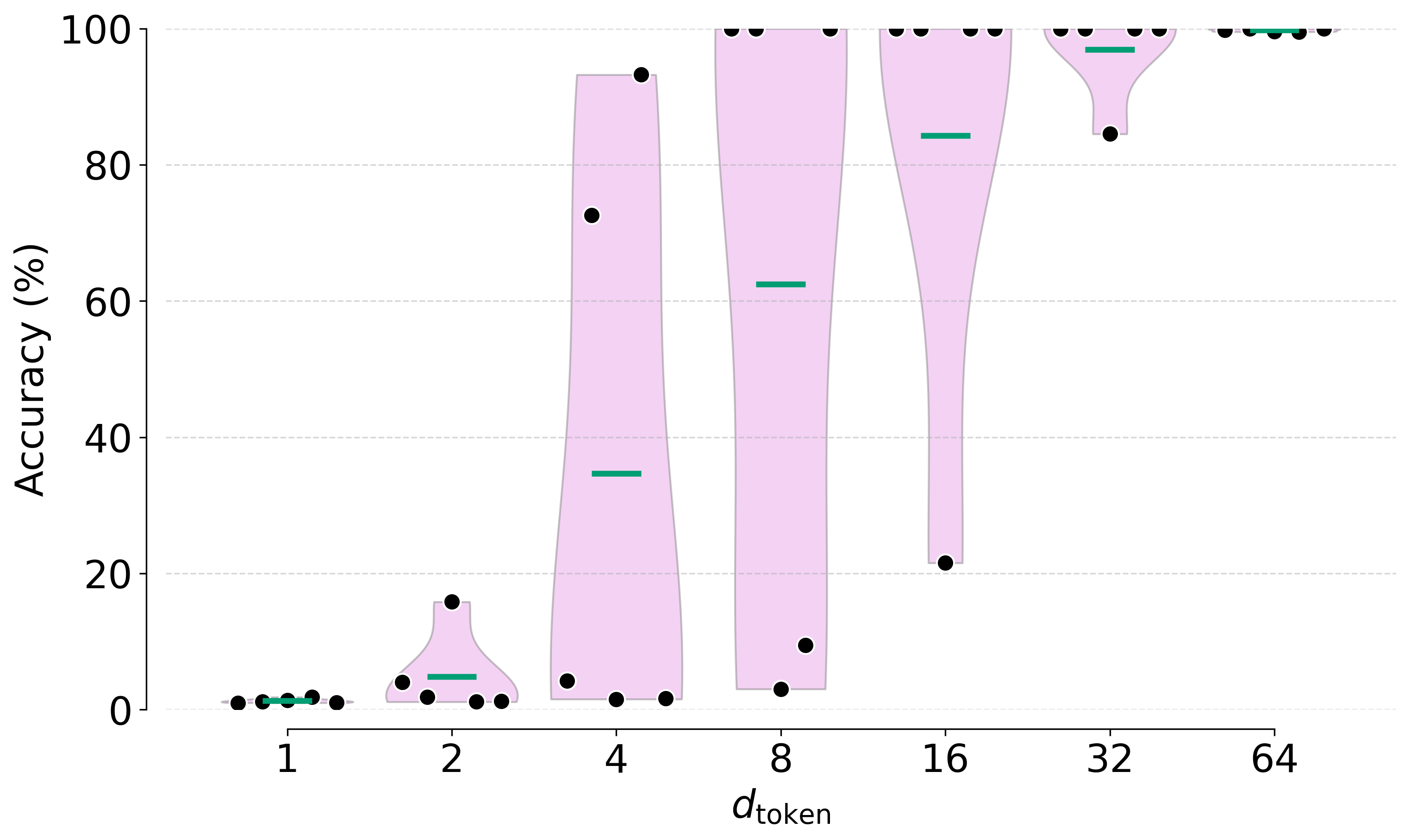}
        \caption{Case 2: No PE, Ordered Keys}
        \label{fig:case2-ordered}
    \end{subfigure}
    \hfill
    \begin{subfigure}[b]{0.48\textwidth}
        \centering
        \includegraphics[width=\textwidth]{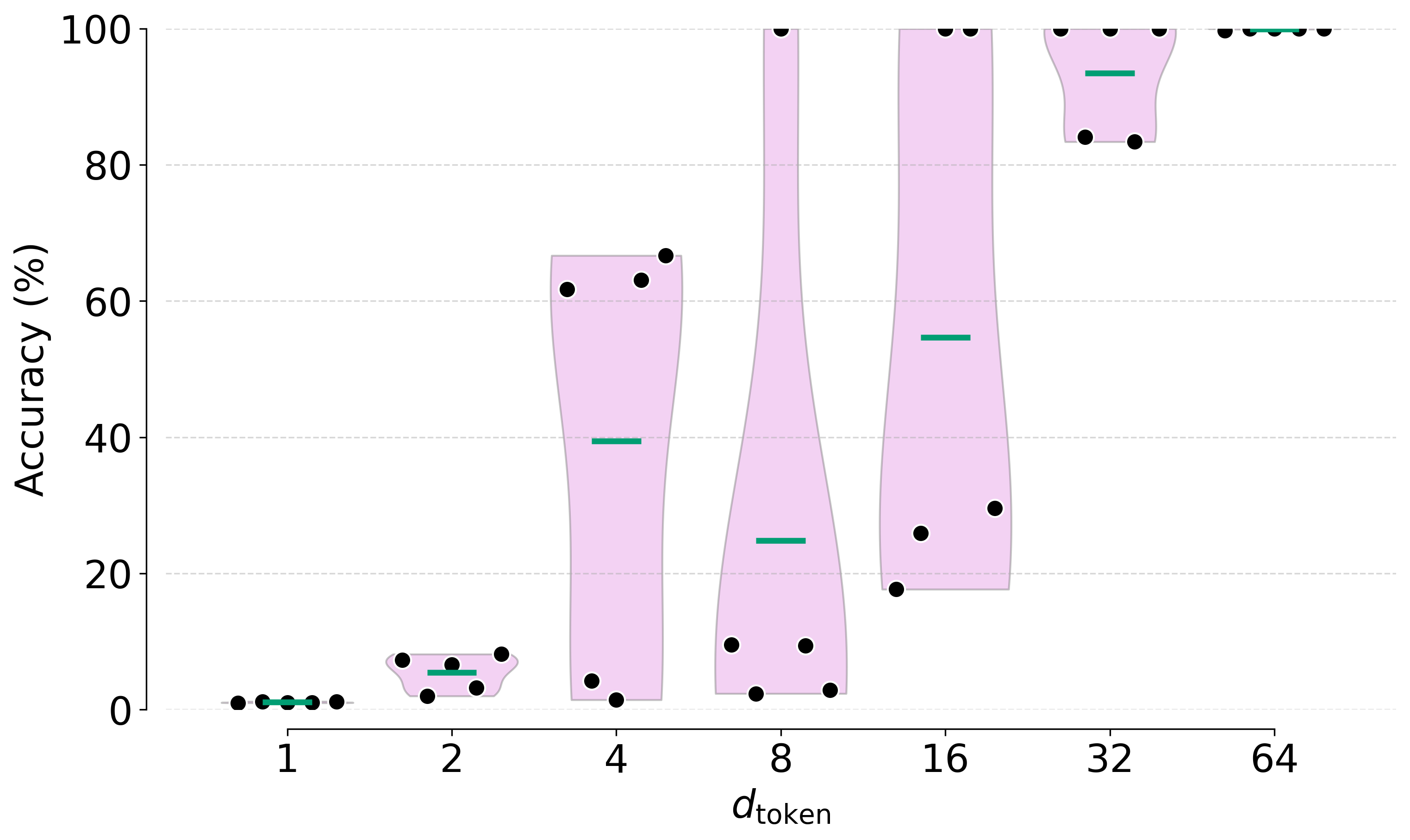}
        \caption{Case 3: No PE, Permuted Keys}
        \label{fig:case3-permuted}
    \end{subfigure}
    
    \vspace{0.3cm}

    \begin{subfigure}[b]{0.48\textwidth}
        \centering
        \includegraphics[width=\textwidth]{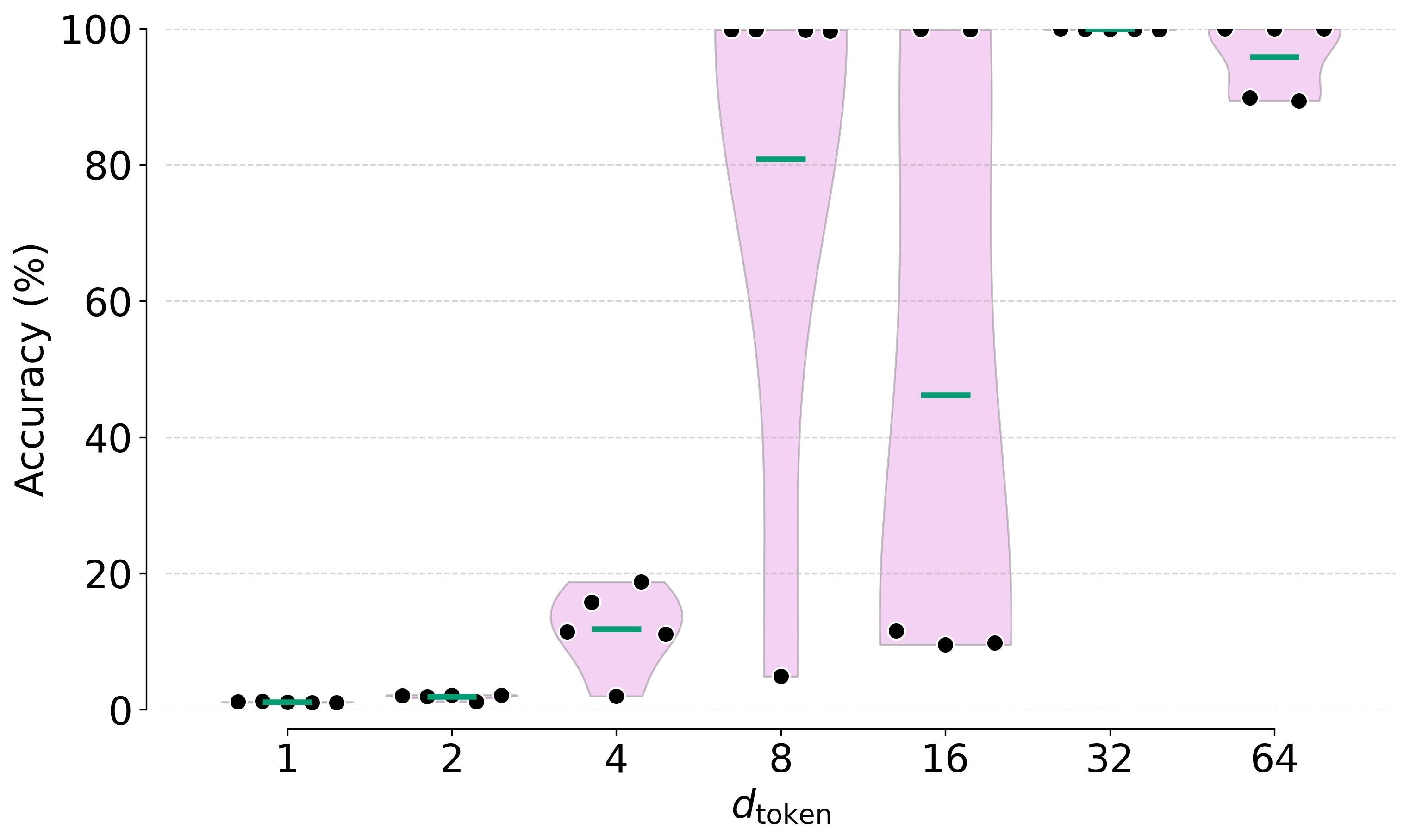}
        \caption{Case 4: Learned PE, Ordered Keys}
        \label{fig:case4-learned-ordered}
    \end{subfigure}
    \hfill
    \begin{subfigure}[b]{0.48\textwidth}
        \centering
        \includegraphics[width=\textwidth]{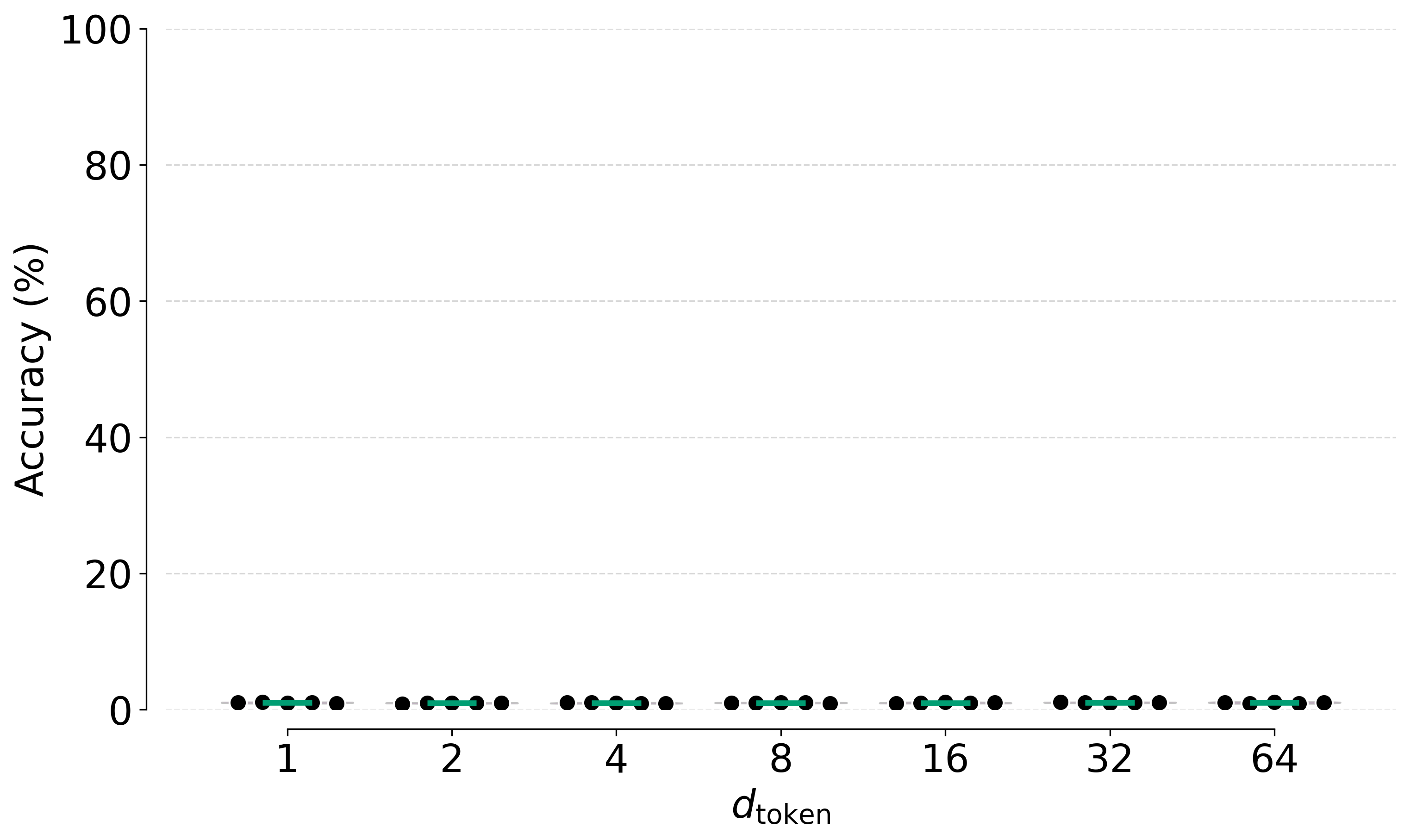}
        \caption{Case 4: No PE, Ordered Keys}
        \label{fig:case4-none-ordered}
    \end{subfigure}
    
    \caption{Each violet violin illustrates the distribution of accuracies obtained, individual runs are represented by black dots, and the green horizontal lines indicate the mean accuracy.
    The results demonstrate a bimodal distribution for certain dimensions.}
    \label{fig:bimodality_summary}
\end{figure}

\end{document}